\definecolor{baselinecolor}{gray}{.9}
\newtheorem{prop}{Proposition}
\newcommand\identity{1\kern-0.25em\text{l}}
\DeclareMathOperator{\Var}{Var}
\begin{document}

\title{Efficient Training of Spiking Neural Networks by Spike-aware Data Pruning}

\author{
	Chenxiang~Ma, 	
    Xinyi~Chen,
    Yujie~Wu,
    Kay~Chen~Tan,~\IEEEmembership{Fellow,~IEEE},
	Jibin~Wu,~\IEEEmembership{Member,~IEEE}
 % <-this % stops a space
% \thanks{
% %This work was supported 
% }% <-this % stops a space
\thanks{\textit{Corresponding author: Jibin Wu (e-mail: jibin.wu@polyu.edu.hk).}

Chenxiang Ma and Xinyi Chen are with the Department of Data Science and Artificial Intelligence, The Hong Kong Polytechnic University, Hong Kong, SAR. 

Yujie Wu is with the Department of Computing, The Hong Kong Polytechnic University, Hong Kong, SAR. 

Kay Chen Tan is with the Department of Data Science and Artificial Intelligence and the Research Center of Data Science and Artificial Intelligence, The Hong Kong Polytechnic University, Hong Kong, SAR. 

Jibin Wu is with the Department of Data Science and Artificial Intelligence, the Department of Computing, and the Research Center of Data Science and Artificial Intelligence, The Hong Kong Polytechnic University, Hong Kong, SAR. }
}

% The paper headers
%\markboth{Journal of \LaTeX\ Class Files,~Vol.~14, No.~8, August~2021}%
%{Shell \MakeLowercase{\textit{et al.}}: A Sample Article Using IEEEtran.cls for IEEE Journals}

%\IEEEpubid{0000--0000/00\$00.00~\copyright~2021 IEEE}
% Remember, if you use this you must call \IEEEpubidadjcol in the second
% column for its text to clear the IEEEpubid mark.

\maketitle

\begin{abstract}
Spiking neural networks (SNNs), recognized as an energy-efficient alternative to traditional artificial neural networks~(ANNs), have advanced rapidly through the scaling of models and datasets. However, such scaling incurs considerable training overhead, posing challenges for researchers with limited computational resources and hindering the sustained development of SNNs. Data pruning is a promising strategy for accelerating training by retaining the most informative examples and discarding redundant ones, but it remains largely unexplored in SNNs. Directly applying ANN-based data pruning methods to SNNs fails to capture the intrinsic importance of examples and suffers from high gradient variance. To address these challenges, we propose a novel spike-aware data pruning (SADP) method. SADP reduces gradient variance by determining each example's selection probability to be proportional to its gradient norm, while avoiding the high cost of direct gradient computation through an efficient upper bound, termed spike-aware importance score. This score accounts for the influence of all-or-nothing spikes on the gradient norm and can be computed with negligible overhead. Extensive experiments across diverse datasets and architectures demonstrate that SADP consistently outperforms data pruning baselines and achieves training speedups close to the theoretical maxima at different pruning ratios. Notably, SADP reduces training time by 35\% on ImageNet while maintaining accuracy comparable to that of full-data training. This work, therefore, establishes a data-centric paradigm for efficient SNN training and paves the way for scaling SNNs to larger models and datasets. The source code will be released publicly after the review process.
\end{abstract}

\begin{IEEEkeywords}
Spiking neural networks, neuromorphic computing, data pruning, efficient training.
\end{IEEEkeywords}

\section{Introduction}
\IEEEPARstart{S}{piking} neural networks (SNNs) have gained considerable attention as a promising energy-efficient alternative to traditional artificial neural networks (ANNs)~\cite{roy2019towards,10636118,eshraghian2021training,2025cai,2025wangtoward,10691937}. Unlike ANNs, which rely on dense, continuous-valued activations that require frequent and computationally intensive updates across the entire network, SNNs encode and transmit information through sparse, binary spikes~\cite{maass1997networks}. This enables only a small subset of neurons to participate in processing afferent information while the majority remain quiescent at any given moment~\cite{10636118}. In addition, SNNs exhibit abundant spatiotemporal dynamics by temporally integrating incoming spikes into their membrane potentials and generating spikes only upon reaching threshold conditions~\cite{eshraghian2021training}. The spike-based representation and temporal integration enable inherently sparse, asynchronous, and event-driven computation. When deployed on neuromorphic hardware~\cite{davies2018loihi,pei2019towards,Darwin3,yao2024nc}, these characteristics translate into exceptional energy efficiency, thereby rendering SNNs highly advantageous in resource-constrained applications~\cite{spikingfullsubnet,9044638,hybridCoding,wu2020deep,2025quspiking}.

The rapid advancement of SNNs in recent years has been propelled by breakthroughs in scalable training methods~\cite{wu2018spatio,2023tandemwu,2022PTLwu} and the expansion of large-scale architectures~\cite{sewresnet,zhou2023spikformer,zhou2024qkformer,2025pami-transformer} and datasets~\cite{dvsgesture,li2017cifar10,hardvs,9311226}. In particular, the non-differentiable nature of the spike firing function posed a major obstacle to the development of scalable training methods~\cite{6469239,8305661,8351987,wu2018spiking}. This obstacle was largely overcome by surrogate gradients~\cite{neftci2019surrogate}, which enabled the application of gradient-based optimization, such as the back-propagation through time~(BPTT) algorithm~\cite{wu2018spatio,bptt}, to SNNs. Building upon this breakthrough, extensive efforts were devoted to enhancing the trainability of large-scale SNNs~\cite{2025DingAssisting}. In particular, advanced batch normalization methods~\cite{tdbn,duan2022temporal,Guo_2023_ICCV} were introduced to stabilize the optimization of deep architectures, adaptive surrogate gradients~\cite{wang2023adaptive,ijcai2023p335} were designed to facilitate gradient flow, and neuron models~\cite{tclif,plif,ALIF,dhsnn} incorporating more complex neuronal dynamics were developed to strengthen temporal processing capabilities~\cite{ma2025spiking}. Collectively, these advancements have enabled the expansion of SNNs to large-scale architectures with hundreds of layers and millions of parameters, exemplified by spiking variants of ResNets~\cite{sewresnet,2023spikingdeepresidualnetworks} and Transformers~\cite{zhou2023spikformer,zhou2024qkformer,2025pami-transformer}. Alongside this architectural growth, training datasets have also expanded considerably. Performance evaluation has shifted to large-scale datasets like ImageNet~\cite{deng2009imagenet}, which contains over one million training examples. Similarly, event-based datasets, collected by dynamic vision sensors (DVS)~\cite{4444573}, have grown from the one thousand examples in DvsGesture~\cite{dvsgesture} to more than one hundred thousand in the recent HAR-DVS dataset~\cite{hardvs}.

However, the increasing scale of both architectures and datasets leads to substantially longer training time, posing significant challenges in terms of extended development cycles and heavy computational demand. These high training costs are often unsustainable for researchers with limited high-performance computing resources. Consequently, enhancing the training efficiency of SNNs has become urgent for their scalable deployment and sustained advancement.

Preliminary attempts to improve the training efficiency of SNNs have primarily concentrated on learning algorithms~\cite{eprop,sltt,ottt} and neuron models~\cite{fang2023parallel,chen2024a}. From the algorithmic perspective, online learning rules~\cite{eprop,sltt,ottt} were developed to mitigate the high memory cost of BPTT. By decoupling the temporal dependencies of gradients, these approaches allow memory consumption to be independent of the total number of time steps. In addition, parallel spiking neuron models~\cite{fang2023parallel,chen2024a} were proposed to support simultaneous gradient computation along the temporal dimension, which accelerates training, particularly on long-sequence tasks. In contrast to these prior efforts, this work explores a fundamentally orthogonal perspective by targeting the dataset as the source of training efficiency gains. Given that large-scale datasets typically contain many redundant or uninformative examples, identifying a small and informative subset from the full training dataset, known as data pruning, provides a promising strategy for reducing training time while maintaining competitive performance.

In this article, we systematically investigate data pruning for SNNs. We start by directly applying existing data pruning methods designed for ANNs~\cite{forgetting, el2n, infobatch}. Our analysis reveals two critical reasons why these methods fall short when applied to SNNs. First, existing approaches struggle to efficiently and accurately evaluate data importance in the SNN training. Data pruning methods assign an importance score to each training example and then retain the most informative subset. Approaches such as Forgetting~\cite{forgetting} and EL2N~\cite{el2n} compute these scores by tracking training dynamics on the full dataset over tens of epochs, which is computationally expensive. InfoBatch~\cite{infobatch} offers a more efficient alternative by using the loss value as the importance score. However, this strategy is relatively limited for SNNs, where the gradient norm, a faithful measure of an example’s contribution to training, relies on sparse spike activity. Due to the all-or-nothing nature of spikes, a spike with a value of zero eliminates the corresponding gradient contribution, regardless of the loss value. As a result, the loss value correlates poorly with the gradient norm. This weak correlation becomes increasingly pronounced as spike sparsity increases (Figure~\ref{fig:ilustration}(a)), limiting the effectiveness of loss-based importance scores in SNNs.

\begin{figure*}[!t]
\centering\includegraphics[width=0.95\linewidth]{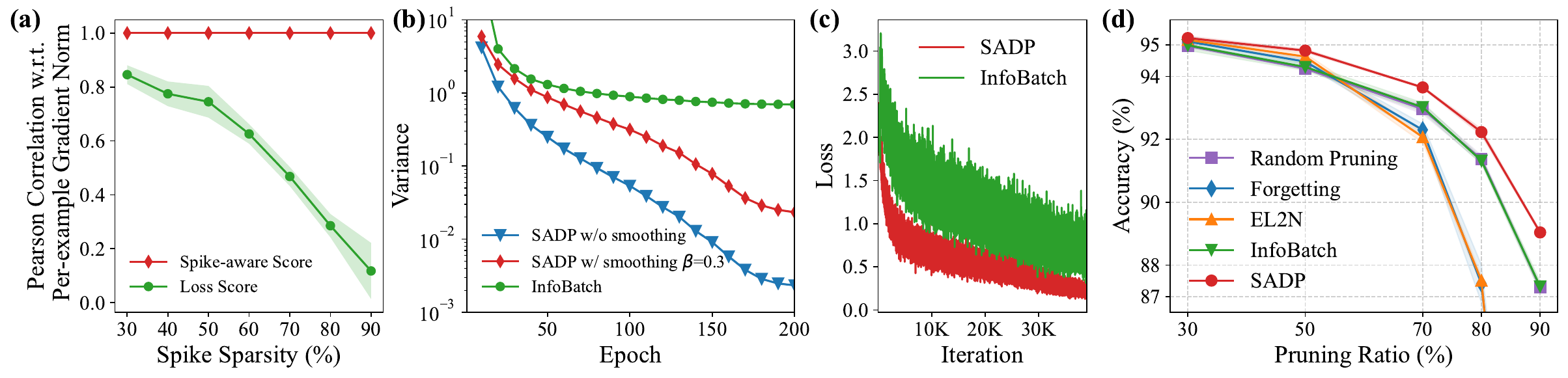}
\vspace{-2mm}
\caption{Comparison of SADP with existing data pruning methods.  
(a) Pearson correlation between the per-example gradient norm and different importance scores. Our spike-aware score maintains a significantly higher correlation than the loss score, particularly under high spike sparsity. 
(b) SADP reduces gradient variance throughout training.  
(c) This variance reduction leads to faster convergence.  
(d) SADP consistently achieves higher accuracy across different pruning ratios, with accuracy gains becoming more pronounced at higher ratios. Experiments are conducted with ResNet18 on CIFAR-10.
	}
\label{fig:ilustration}
\end{figure*}

Second, existing data pruning methods suffer from high gradient variance, hindering the convergence speed of SNN training. Forgetting~\cite{forgetting} and EL2N~\cite{el2n} select examples deterministically, leading to both biased and high-variance gradient estimates relative to using the full dataset. InfoBatch~\cite{infobatch} mitigates the bias issue by employing probabilistic sampling and reweighting the gradients of selected examples by the inverse of their probabilities. However, it incurs high gradient variance, which slows convergence and ultimately limits its performance (Figures~\ref{fig:ilustration}(b) and (c)).

To address these challenges, we propose spike-aware data pruning (SADP) (Figure~\ref{fig:pipeline}), a novel data pruning method tailored for SNNs. We formalize data pruning as a variance minimization problem and show that, under this objective, the optimal selection probability of a training example is approximately proportional to its gradient norm. Computing exact per-example norms, however, is prohibitively expensive. Therefore, we propose a spike-aware importance score that is an upper bound on the gradient norm, capturing the effect of all-or-nothing spikes while incurring negligible overhead. This score enables a tractable relaxation of the variance minimization objective, thereby closely approximating its optimum. To prevent training instability from low selection probabilities, we further propose a smoothing mechanism that enforces a minimum probability, stabilizing training while preserving low variance. Finally, to facilitate more efficient data usage over the course of training, we propose a dynamic pruning schedule that increases the pruning ratio over epochs while keeping the average ratio constant.

We conduct extensive experiments on both static and event-based vision datasets, including CIFAR-10~\cite{cifar}, CIFAR-100~\cite{cifar}, ImageNet~\cite{deng2009imagenet}, CIFAR10-DVS~\cite{li2017cifar10}, and HAR-DVS~\cite{hardvs}, across diverse architectures such as spiking VGG~\cite{simonyan2014very}, ResNet~\cite{he2016deep,sewresnet}, and Transformer~\cite{yao2024spikedriven}. Experimental results show that SADP consistently outperforms existing data pruning methods across different pruning ratios, with its advantage becoming more pronounced as the pruning ratio increases. Owing to its negligible computational overhead, SADP achieves the theoretical maximum reduction in training time, approximately equal to the given pruning ratio. For instance, SADP reduces training time by 70\% on CIFAR10-DVS~\cite{li2017cifar10} and 35\% on ImageNet~\cite{deng2009imagenet} without compromising accuracy. Furthermore, SADP demonstrates broad compatibility with various spiking models (such as PSN~\cite{fang2023parallel} and T-RevSNN~\cite{pmlr-v235-hu24q}), learning algorithms (including online~\cite{eprop,sltt} and local learning~\cite{decolle,ma2023ell}), and efficient inference techniques (such as quantization-aware training~\cite{wei2025qpsnn} and network pruning~\cite{li2024towards}). 

Our key contributions are summarized as follows:

\begin{itemize}
    \item We are the first to systematically investigate data pruning for SNNs and identify two challenges that limit the direct adoption of ANN-based methods, namely, ineffective data importance estimation and high gradient variance.
    \item We propose SADP, the first data pruning framework designed for SNNs. It incorporates a variance minimization formulation, a spike-aware importance score, a probability smoothing mechanism, and a dynamic pruning schedule to jointly overcome the identified challenges, achieving superior accuracy and training efficiency.
    \item Extensive experiments demonstrate that SADP outperforms prior data pruning methods, reduces training time by over 30\% with lossless accuracy, and generalizes well across diverse SNN methods.
\end{itemize}

The remainder of this paper is structured as follows. Section~\ref{sec:preliminaries} provides essential preliminaries on SNNs and data pruning. Section~\ref{sec:challenges} analyzes the challenges of data pruning in SNNs. Section~\ref{sec:SADP} details the proposed SADP method, and Section~\ref{sec:exp} presents extensive experimental evaluations and analyses. Finally, Section~\ref{sec:conclusion} concludes the paper.

\begin{figure*}[!t]
\centering\includegraphics[width=0.98\linewidth]{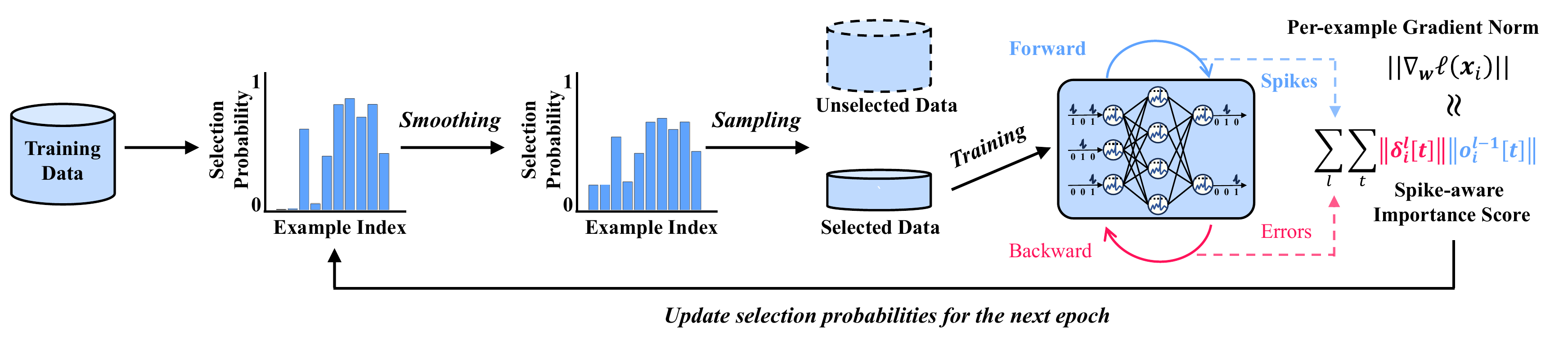}
\vspace{-2.5mm}
\caption{Illustration of SADP. At the start of each training epoch, selection probabilities are computed for all examples using the proposed spike-aware importance score, which provides an efficient and accurate approximation of the per-example gradient norm for variance minimization. To enhance training stability, the probabilities are smoothed to avoid extremely small values, after which a subset of examples is probabilistically sampled for training the SNN. Note that the spike-aware importance score explicitly captures the effect of sparse binary spikes on the per-example gradient norm, and its computation requires only quantities already available from forward and backward passes, thereby achieving both high efficiency and strong effectiveness.
	}
\label{fig:pipeline}
\end{figure*}

\section{preliminaries}
\label{sec:preliminaries}
\subsection{Spiking Neural Networks} SNNs are commonly built with Leaky Integrate-and-Fire~(LIF) neuron model~\cite{roy2019towards}. At time step~$t$, LIF neurons in layer $l$ integrate incoming spikes $\bm{o}^{l-1}[t]$ into their membrane potential $\bm{u}^{l}[t]$, which decays over time by a factor of $\lambda$. When $\bm{u}^{l}[t]$ surpasses a threshold~$\vartheta$, the neurons generate binary ($0$ or $1$) spikes $\bm{o}^{l}[t]$, followed by a reset process. This neuronal dynamics can be formulated as:
% \begin{align}
% \bm{u}^{l}[t] &= \lambda \bm{u}^{l}[t - 1]  + \bm{W}^{l} \bm{o}^{l-1}[t], \\
%     o_i^{l}[t]&=1,\ \text{if} \ u_i^{l}[t] >= \vartheta, \ \ 0, \ \text{otherwise},\\
% \bm{u}^{l}[t] &= \bm{u}^{l}[t] - \vartheta \bm{o}^{l}[t].
% \end{align}  
\begin{align}
\bm{u}^{l}[t] &= \lambda \bm{u}^{l}[t - 1]  + \bm{W}^{l} \bm{o}^{l-1}[t], \\
    o_j^{l}[t]&=\begin{cases}1, & u_j^{l}[t] >= \vartheta,\\0, & \text{otherwise},\end{cases}\\
\bm{u}^{l}[t] &= \bm{u}^{l}[t] - \vartheta \bm{o}^{l}[t].
\end{align}  
Large-scale SNNs are typically trained with BPTT~\cite{wu2018spatio}, which computes the gradient of a loss $\mathcal{L}$ with respect to the $l$-th layer weight~$\bm{W}^l$ as:
\begin{align}
\!\frac{\partial \mathcal{L}}{\partial \bm{W}^{l}} &= \sum_{t=1}^{T}\frac{\partial \mathcal{L}}{\partial \bm{u}^{l}[t]}  \frac{\partial \bm{u}^{l}[t]}{\partial \bm{W}^{l}} = \sum_{t=1}^{T}{\bm{\delta}^{l}[t]} {\bm{o}^{l-1}[t]}^{\!\top}, \label{eq:weight}
\\[1.5ex]
\bm{\delta}^{l}[t]\!&=\!\begin{cases} 
\frac{\partial \mathcal{L}}{\partial \bm{u}^L[T]},  &l\!=\!L~\text{and}~t\!=\!T, \\[1.5ex] 
\bm{\delta}^{L}[t\!+\!1] \!\frac{\partial \bm{u}^{L}[t+1]}{\partial \bm{u}^{L}[t]}+\frac{\partial \mathcal{L}}{\partial \bm{u}^L[t]},  &l\!=\!L~\text{and}~t\!<\!T, \\[1.5ex] 
\bm{\delta}^{l+1}[T] \!\frac{\partial \bm{u}^{l+1}[T]}{\partial \bm{o}^{l}[T]} \! \frac{\partial \bm{o}^{l}[T]}{\partial \bm{u}^{l}[T]},  &l\!<\!L~\text{and}~t\!=\!T, \\[1.5ex] 
\bm{\delta}^{l}\![t\!+\!1]\frac{\partial \bm{u}^{l}\![t+1]}{\partial \bm{u}^{l}\![t]} + \bm{\delta}^{l+1}\![t] \frac{\partial \bm{u}^{l+1}\![t]}{\partial \bm{u}^{l}\![t]},\!\!\!&\!\text{otherwise,}\! \end{cases}
\label{eq:grad_m}
\end{align}
where $\bm{\delta}^{l}[t]\coloneq\frac{\partial \mathcal{L}}{\partial \bm{u}^{l}[t]}$, represents the error back-propagated from the last layer and the last time step. $T$ denotes the total number of time steps. The gradient $\frac{\partial o_{j}^{l}[t]}{\partial u_{j}^{l}[t]}$ is zero except at the point where $u_{j}^{l}[t]\!=\!\vartheta$, where it becomes infinite. To address this issue, a continuous surrogate gradient function~\cite{neftci2019surrogate} is often adopted to replace the gradient during the backward pass.

\subsection{Data Pruning} Data Pruning seeks to select a small subset $\mathcal{S}$ of size $S$ from the full training dataset~$\mathcal{D}$ of size $N$, such that a model trained on $\mathcal{S}$ can achieve test performance comparable to that obtained when trained on~$\mathcal{D}$. The subset size $S$ is determined by a given pruning ratio $r$, defined as $r\!\coloneq\!1-\frac{S}{N}$. If the subset is efficiently identified, data pruning can reduce the training time by a factor of $r$. Existing data pruning methods assign an importance score to each training example and select those with higher scores to build the subset. For instance, Forgetting~\cite{forgetting} and EL2N~\cite{el2n} track training dynamics to estimate the forgetting and error L2-norm scores, respectively. Examples with scores above a threshold are selected before standard training begins and remain fixed throughout the training.

In contrast, the recently proposed InfoBatch~\cite{infobatch} assigns each example's loss as its importance score and adaptively selects a subset at the start of every epoch. To ensure unbiased gradient estimates between training on the selected subset and the full dataset, InfoBatch employs probabilistic sampling. Specifically, each example $i$ is selected with a predefined probability~$p_i\!=\!z$ if its score is less than the mean of all examples' scores, and with probability~$p_i\!=\!1$ otherwise. Note that the probability value $z$ is a hyperparameter that requires manual tuning. The gradients of the selected examples are subsequently upscaled by a factor of~$\frac{1}{p_i}$ to correct for the sampling bias.

\section{Challenges of Data Pruning in SNNs}
\label{sec:challenges}
This section details two critical challenges that limit the effectiveness of existing data pruning methods in SNNs: ineffective data importance estimation and high gradient variance.

\subsection{Ineffective Data Importance Estimation}
\label{subsec:imp_est}
Accurate estimation of data importance is crucial for data pruning, as it determines which examples are selected. However, existing methods fail to efficiently and effectively capture true data importance when applied to SNNs.  Forgetting~\cite{forgetting} and EL2N~\cite{el2n} estimate importance scores by iteratively tracking forgetting events and L2-norm of errors, respectively, which incur significant computational overhead. InfoBatch~\cite{infobatch} mitigates this cost by using the loss value as the importance score.
%requiring negligible computational cost. 
However, this approach is inadequate for SNNs due to the weak correlation between the loss and the gradient norm, as illustrated in Figure~\ref{fig:ilustration}(a) and further analyzed below.

We analyze the weight gradient norm of an example when training an SNN on a dataset $\mathcal{D}\!=\!\{\bm{x}_i\}_{i=1}^{N}$, since it directly reflects the example’s contribution to the training process. For clarity, we formulate the square of the gradient norm computed via the BPTT algorithm~\cite{wu2018spatio}, given by:
\begin{align}
\vspace{-0.4em}
\|\nabla_{\bm{W}}\ell\left(\bm{x}_i\right)\|^2\!&=\!\sum_{l\in \mathbb{L}}\|\nabla_{\bm{W}^l} \ell(\bm{x}_i)\|^2\!\\&=\!\sum_{l\in \mathbb{L}}\|\sum_{t=1}^{T}\frac{\partial\ell(\bm{x}_i)}{\partial \bm{u}^{l}[t]}\frac{\partial \bm{u}^{l}[t]}{\partial \bm{W}^{l}}\|^2\!\\&=\!\sum_{l\in \mathbb{L}}\|\sum_{t=1}^{T}{\bm{\delta}_i^{l}[t]}{\bm{o}_i^{l-1}[t]}^{\top}\|^2, \label{eq:weight_grad}
\vspace{-0.5em}
\end{align}
where $\mathbb{L}$ denotes the set of network layers and $\|\cdot\|$ represents the Euclidean norm of a vector or the Frobenius norm of a matrix. $\nabla_{\bm{W}^l} \ell(\bm{x}_i)
$ denotes the gradient of the example $i$'s loss $\ell(\bm{x}_i)$ with respect to the weights of the $l$-th layer~$\bm{W}^l$. $\bm{\delta}_i^{l}[t]\coloneq\frac{\partial \ell(\bm{x}_i)}{\partial \bm{u}^{l}[t]}$ is the backpropagated error corresponding to example $i$. It captures both spatial and temporal dependencies and is computed recursively as in Eq.~(\ref{eq:grad_m}).

As shown in Eq.~(\ref{eq:weight_grad}), the gradient norm of example~$i$ is governed by the outer products between its errors~$\{\bm{\delta}_i^l[t]\}_{l, t}$ and spikes~$\{\bm{o}_i^{l-1}[t]\}_{l,t}$. Due to the binary and sparse nature of spikes, any zero-valued $\bm{o}_i^{l-1}[t]$ eliminates the corresponding gradient contributions, regardless of the magnitude of the loss or errors. The all-or-nothing characteristic of spikes thus results in a weak correlation between the loss value~\cite{infobatch} and the weight gradient norm in SNNs, which becomes more pronounced as spike sparsity increases (Figure~\ref{fig:ilustration}(a)). Therefore, loss-based data pruning methods, such as InfoBatch~\cite{infobatch}, fail to identify truly important examples, leading to suboptimal performance in SNNs (Figure~\ref{fig:ilustration} (d)). 

Notably, computing the per-example gradient norm in Eq.~(\ref{eq:weight_grad}) is computationally expensive in popular SNN training frameworks such as PyTorch~\cite{pytorch}, SpikingJelly~\cite{spikingjelly}, and snnTorch~\cite{eshraghian2021training}. This is because these frameworks automatically average gradients over the batch dimension, rendering per-example gradients inaccessible, whereas computing gradients one example at a time is prohibitively inefficient.

As a result, efficiently estimating data importance in a way that closely approximates the per-example gradient norm remains a critical challenge for data pruning in SNNs.

\subsection{High Gradient Variance and Slow Convergence Speed}
\label{subsec:high_var}
While InfoBatch~\cite{infobatch} ensures unbiased gradient estimates between the selected subset and the full dataset, it suffers from high gradient variance, which slows SNN training convergence. This issue is visualized in Figures~\ref{fig:ilustration}(b) and (c) and further analyzed in the following.

Here, we consider the general weight update rule for training an SNN at iteration~$m\!+\!1$, given by:
\begin{equation}
\bm{W}_{m+1} = \bm{W}_{m} - \eta \nabla_{\bm{W}}\mathcal{L},
\label{eq:sgd_weight}
\end{equation}
where $\eta$ denotes the learning rate, and $\nabla_{\bm{W}} \mathcal{L}$ is the gradient of the total loss $\mathcal{L}$ with respect to the weights~$\bm{W}$. The total loss $\mathcal{L}$ is defined as the average loss over all examples, i.e. $\mathcal{L}\coloneq\frac{1}{N} \sum_{i=1}^{N}\ell\left(\bm{x}_i\right)$.

Then, we formalize the convergence speed $\mathcal{C}$ as the expected reduction in distance to the optimal weights~$\bm{W}^*$ between two consecutive iterations. Specifically, using the weight update rule in Eq.~(\ref{eq:sgd_weight}), the convergence speed is defined as:
\begin{align}
\mathcal{C} &\coloneq -\mathbb{E} \Big[ \| \bm{W}_{m+1} - \bm{W}^* \|^2 - \| \bm{W}_m - \bm{W}^* \|^2 \Big] \nonumber\\
&= -\mathbb{E} \Big[ (\bm{W}_m - \eta \nabla_{\bm{W}}\mathcal{L})^\top (\bm{W}_m - \eta \nabla_{\bm{W}}\mathcal{L}) \nonumber\\
&\quad\quad + 2 \eta {\nabla_{\bm{W}}\mathcal{L}}^\top\bm{W}^* - \bm{W}_m^\top \bm{W}_m \Big] \nonumber\\
&= -\mathbb{E} \Big[ -2\eta (\bm{W}_m - \bm{W}^*)^\top \nabla_{\bm{W}}\mathcal{L} 
   + \eta^2 {\nabla_{\bm{W}}\mathcal{L}}^\top \nabla_{\bm{W}}\mathcal{L} \Big] \nonumber\\
&= 2\eta (\bm{W}_m - \bm{W}^*)^\top \mathbb{E}[ \nabla_{\bm{W}}\mathcal{L}]
   - \eta^2 \mathbb{E}[ \nabla_{\bm{W}}\mathcal{L} ]^\top \mathbb{E}[ \nabla_{\bm{W}}\mathcal{L}] \nonumber\\
&\quad - \eta^2  \Var[\nabla_{\bm{W}}\mathcal{L}].
\label{eq:convergence_vs_variance}
\end{align}
Eq.~(\ref{eq:convergence_vs_variance}) reveals that the variance term $\Var[\nabla_{\bm{W}}\mathcal{L}]$ introduces noise into the weight update and reduces convergence speed. %High gradient variance can significantly hinder training efficiency. 

This insight underscores an important objective for data pruning: minimizing gradient variance to accelerate convergence and improve training efficiency.

\section{Spike-Aware Data Pruning (SADP)}
\label{sec:SADP}
In this section, we propose SADP, which jointly addresses the aforementioned two challenges of accurately and efficiently estimating data importance and minimizing gradient variance. SADP is grounded in a variance minimization framework, where each example's optimal selection probability is proportional to its gradient norm (Section~\ref{sec:low-var}). To approximate this quantity with negligible computational overhead, we propose an efficient upper bound on the gradient norm, referred to as spike-aware importance score (Section~\ref{sec:spk_proxy}). To enhance stability during training, SADP further incorporates a probability smoothing mechanism that prevents selection probabilities from collapsing to very small values (Section~\ref{sec:smoothing}). Finally, we propose a dynamic pruning ratio schedule that promotes more efficient and balanced utilization of data over the course of training~(Section~\ref{sec:schedule}). An overview of SADP is presented in Figure~\ref{fig:pipeline}, with pseudocode provided in Algorithm~\ref{alg:sadp}.

\begin{algorithm}[!t]
   \caption{Spike-Aware Data Pruning (SADP)}
   \label{alg:sadp}
\begin{algorithmic}[1]  % [1] for line numbering
   \State \textbf{Input:} SNN model~$f_{\bm{W}}$, dataset~$\mathcal{D}\!=\!{\{\bm{x}_i\}}_{i=1}^N$, training epochs $K$, pruning ratio $r$, maximum ratio $r_{\mathrm{max}}$, smoothing constant $\beta$, layer index $l$, time window $T$, subset size $S$, batch size $B$.
   \State Initialize training examples' importance scores $\{G_i\}_{i=1}^N$.
   \For {epoch $k = 1$ to $K$}
       \State Compute pruning ratio $r_k$ using Eq.~(\ref{eq:compute_rk}).
       \State Compute examples' selection probabilities $\mathbf{p}\!=\!(p_1, \ldots, p_n)$ using Eqs.~(\ref{eq:compute_offset}) and (\ref{eq:compute_smoothed_pi}). 
       \State Build subset $\mathcal{S} = \{ \bm{x}_i \in \mathcal{D} \mid m_i = 1, m_i \sim \text{Bernoulli}(p_i) \}$.
       \For{iteration $m = 1$ to $\frac{S}{B}$}
           \State Sample batch $\{\bm{x}_i\}_{i=1}^B \subseteq \mathcal{S}$.
           \State Perform forward pass and record $\{ \bm{o}_i^{l-1}[t]\}_{i, l, t}$.
           \State Compute the scaled loss based on Eq.~(\ref{eq:est_w_grad}).
           \State Perform backward pass and record $\{\bm{\delta}_i^l[t]\}_{i, l, t}$.
           \State Compute and update $\{G_i\}_{i=1}^B$ using Eq.~(\ref{eq:compute_g_i}).
           \State Update model parameters of $f_{\bm{W}}$.
       \EndFor
   \EndFor
\end{algorithmic}
\end{algorithm}

\subsection{Low-variance Data Pruning}
\label{sec:low-var}
We begin by formulating the estimated gradient computed from a pruned subset of size $S$. At the start of each epoch, every example $i$ is independently selected according to a Bernoulli distribution with a selection probability $p_i$. Let $m_i\in\{0, 1\}$ denote a binary random variable indicating whether example $i$ is selected. The estimated gradient over the dataset~$\mathcal{D}\!=\!\{\bm{x}_i\}_{i=1}^{N}$ can then be given as:
\begin{equation}
\label{eq:est_w_grad}
\hat{\nabla}_{\bm{W}}\mathcal{L}=\frac{1}{S}\cdot\frac{S}{N}\sum_{i=1}^N \frac{\nabla_{\bm{W}}\ell\left(\bm{x}_i\right)\cdot m_i}{p_i},
\end{equation}
where each selected example’s gradient is reweighted by the inverse of its selection probability $p_i$, ensuring that the estimated gradient remains unbiased with respect to the full-data gradient~\cite{infobatch}.

Next, we elucidate how to reduce the variance of the estimated gradient $\Var[\hat{\nabla}_{\bm{W}}\mathcal{L}]$, which can be formulated as the following optimization problem: 
%This objective is formulated as the following optimization problem:
\begin{align}
\min_{\textbf{p}}\quad&\Var[\hat{\nabla}_{\bm{W}}\mathcal{L}]=\frac{1}{N^2}\sum_{i=1}^N \frac{(1-p_i)\|\nabla_{\bm{W}}\ell\left(\bm{x}_i\right)\|^2}{p_i} \nonumber\\
\textrm{s.t.} \quad & \sum_{i=1}^N p_i = S \quad\text{and}\quad 0\leq p_i\leq1,   
\label{eq:min_var}
\end{align}
where the gradient variance is governed by the selection probabilities $\mathbf{p}\!=\!(p_1, \ldots, p_N)$. The constraint $\sum_{i=1}^N p_i\!=\!S$ ensures that the expected number of selected examples equals the given subset size $S$. Proposition~\ref{prop:var_min} derives the optimal form of $\mathbf{p}$ that minimizes the gradient variance:
\begin{prop}
\label{prop:var_min}
For the Bernoulli sampling gradient estimator in Eq.~(\ref{eq:est_w_grad}), the optimal selection probabilities $\mathbf{p}^*\!=\!(p_1^*, \ldots, p_N^*)$ that minimize the gradient variance $\Var[\hat{\nabla}_{\bm{W}}\mathcal{L}]$ are
\begin{equation} p_i^\ast = \frac{\min ( \|\nabla_{\bm{W}} \ell(\bm{x}_i)\|, \alpha)\cdot S}{\sum_{j=1}^{N} \min ( \|\nabla_{\bm{W}} \ell(\bm{x}_j)\|, \alpha )}, \quad i = 1, \ldots, N, 
\label{eq:opt_prob}\end{equation} where the clipping threshold $\alpha$ is defined as 
\begin{equation}
\alpha = \frac{\sum_{i=1}^{N - M} \|\nabla_{\bm{W}} \ell(\bm{x})\|_{(i)}}{S - M},   
\end{equation}
and $\|\nabla_{\bm{W}}\ell(\bm{x})\|_{(1)}\!\leq\!\cdots\!\leq\!\|\nabla_{\bm{W}} \ell(\bm{x})\|_{(N)}$ denote the sorted gradient norms. The integer $M$ satisfies:
\begin{align}
\frac{\|\nabla_{\bm{W}} \ell(\bm{x})\|_{(N - M)}}{\sum_{i=1}^{N - M} \|\nabla_{\bm{W}} \ell(\bm{x})\|_{(i)}} &< \frac{1}{S - M} \quad and \\ 
\frac{\|\nabla_{\bm{W}} \ell(\bm{x})\|_{(N - M + 1)}}{\sum_{i=1}^{N - M + 1} \|\nabla_{\bm{W}} \ell(\bm{x})\|_{(i)}} &\geq \frac{1}{S - M + 1}.    
\end{align}.
\end{prop}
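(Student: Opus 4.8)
The plan is to recognize the problem in Eq.~(\ref{eq:min_var}) as a convex budget-allocation (water-filling) problem and solve it through its KKT conditions. Writing $g_i \coloneq \|\nabla_{\bm{W}}\ell(\bm{x}_i)\|$, I would first simplify the objective: since $\sum_i \frac{(1-p_i)g_i^2}{p_i} = \sum_i \frac{g_i^2}{p_i} - \sum_i g_i^2$ and the second sum is constant in $\mathbf{p}$, minimizing the variance is equivalent to minimizing $\sum_i g_i^2/p_i$ over the feasible set $\{\mathbf{p} : \sum_i p_i = S,\ 0 \leq p_i \leq 1\}$. Each term $g_i^2/p_i$ is strictly convex on $(0,1]$ and the feasible set is convex and compact, so (assuming $g_i > 0$) the minimizer is unique; moreover, because the objective diverges as any $p_i \to 0^+$, the lower-bound constraints are never active at the optimum.

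Next I would form the Lagrangian with a multiplier $\mu$ for the budget $\sum_i p_i = S$ and multipliers $\xi_i \geq 0$ for the upper bounds $p_i \leq 1$. Stationarity gives $-g_i^2/p_i^2 + \mu + \xi_i = 0$. For any example with $p_i < 1$, complementary slackness forces $\xi_i = 0$, so $p_i = g_i/\sqrt{\mu}$; for any example at $p_i = 1$ it requires $\xi_i = g_i^2 - \mu \geq 0$, i.e. $g_i \geq \sqrt{\mu}$. Setting $\alpha \coloneq \sqrt{\mu}$, the two cases fuse into $p_i^\ast = \min(g_i,\alpha)/\alpha$. Summing this and enforcing $\sum_i p_i^\ast = S$ yields $\sum_i \min(g_i,\alpha) = \alpha S$, which lets me rewrite $p_i^\ast = \min(g_i,\alpha)\cdot S / \sum_j \min(g_j,\alpha)$, exactly Eq.~(\ref{eq:opt_prob}).

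It then remains to make $\alpha$ explicit. Ordering the norms $g_{(1)} \leq \cdots \leq g_{(N)}$ and letting $M$ count the clipped examples (those with $g_i \geq \alpha$, which are the $M$ largest), I split the identity $\sum_i \min(g_i,\alpha) = \alpha S$ into unclipped and clipped parts: $\sum_{i=1}^{N-M} g_{(i)} + M\alpha = \alpha S$, hence $\alpha = \big(\sum_{i=1}^{N-M} g_{(i)}\big)/(S-M)$ as stated. Self-consistency of this clipping level requires the largest unclipped norm to lie below $\alpha$ and the smallest clipped norm at or above it, i.e. $g_{(N-M)} < \alpha \leq g_{(N-M+1)}$. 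The left inequality is immediately the first displayed condition. For the right one, writing $A = \sum_{i=1}^{N-M} g_{(i)}$, $b = g_{(N-M+1)}$, $s = S-M$, the bound $b \geq \alpha = A/s$ is algebraically equivalent to $b \geq (A+b)/(s+1)$, which is exactly the second displayed condition (the threshold one would obtain from $M-1$ clippings).

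The main obstacle I anticipate is proving that such an $M$ exists and is unique. Defining $D(M) \coloneq g_{(N-M)}(S-M) - \sum_{i=1}^{N-M} g_{(i)}$, the two boundary inequalities read $D(M-1) \geq 0 > D(M)$, so I must show $D$ crosses zero exactly once. A short computation using the sorted order gives $D(M) - D(M+1) = (S-M-1)\big(g_{(N-M)} - g_{(N-M-1)}\big) \geq 0$ on the admissible range $M \leq S-1$, so $D$ is nonincreasing and the crossing is unique; this simultaneously certifies feasibility, since below the crossing all unclipped $p_i^\ast = g_{(i)}/\alpha$ stay in $(0,1)$ while the clipped ones equal $1$. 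Establishing this monotonicity—together with the strict convexity that guarantees the KKT point is the global minimizer—is the only step demanding genuine care; the KKT derivation itself is routine.
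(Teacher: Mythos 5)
Your proof follows essentially the same route as the paper's: a Lagrangian/KKT water-filling analysis of the budget-constrained convex problem, with the identical case split (unclipped examples give $p_i = g_i/\sqrt{\mu}$, clipped ones give $p_i = 1$, with $\alpha = \sqrt{\mu}$) and the same use of the budget identity $\sum_i \min(g_i,\alpha) = \alpha S$ to obtain the normalized form of $p_i^\ast$ and the explicit formula for $\alpha$. You in fact go a bit further than the paper, which stops after the case analysis: you verify that the two displayed conditions on $M$ are exactly the self-consistency inequalities $g_{(N-M)} < \alpha \leq g_{(N-M+1)}$, and you establish existence and uniqueness of $M$ (and global optimality via strict convexity) through the monotonicity of $D(M)$, points the paper's proof leaves implicit.
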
 
\begin{proof}
Let $g_i\coloneq \|\nabla_{\bm{W}} \ell(\bm{x}_i)\|$, and the ordered set $\{g_i\}_{i=1}^N$ is represented by $g_{(1)} \leq g_{(2)} \leq \dots \leq g_{(N)}$. 
% The variance minimization problem can be formulated as:
% \begin{equation}
% \begin{aligned}
% \min_{\mathbf{p}}\quad &\sum_{i=1}^N \frac{(1-p_i)g_{(i)}^2}{p_i} \\
% \text{s.t.} \quad & \sum_{i=1}^N p_i = S \quad\text{and}\quad 0\leq p_i\leq1.    \\
% \end{aligned}
% \end{equation}
The Lagrangian for this constrained optimization problem is
\begin{align}
E(\mathbf{p}, \lambda, \bm{\mu}, \bm{\nu})&=\sum_{i=1}^N \frac{(1-p_i)g_{(i)}^2}{p_i}\\&+\lambda(\sum_{i=1}^Np_i-S)+\sum_{i=1}^N \mu_i\left(p_i-1+\nu_i^2\right),
\end{align}
where $\lambda$, $\bm{\mu}=(\mu_1, \ldots, \mu_N)$, and $\bm{\nu}=(\nu_1, \ldots, \nu_N)$ are the multipliers. By taking partial derivatives of the Lagrangian with respect to $\mathbf{p}$ and the multipliers, we obtain the Karush-Kuhn-Tucker~(KKT) conditions~\cite{boyd2004convex}:
\begin{align}
\frac{\partial E}{\partial p_i} &= -\frac{g_{(i)}^2}{p_i^2} + \lambda + \mu_i = 0, \quad i=1, \dots, N, \label{aeq:pi} \\
\frac{\partial E}{\partial \lambda} &= \sum_{i=1}^N p_i - S = 0, \label{aeq:lambda} \\
\frac{\partial E}{\partial \mu_i} &= p_i - 1 + \nu_i^2 = 0, \quad i=1, \dots, N, \label{aeq:mui} \\
\frac{\partial E}{\partial \nu_i} &= 2\mu_i \nu_i = 0, \quad i=1, \dots, N, \label{aeq:nui} \\
\mu_i &\geq 0, \quad i=1, \dots, N. \label{aeq:mui_pos}
\end{align}
Eq.~(\ref{aeq:nui}) implies that at least one of $\mu_i$ and $\nu_i$ must be zero. If $\mu_i=0$, then from Eq.~(\ref{aeq:pi}), we have $p_i=\frac{g_{(i)}}{\sqrt{\lambda}}$. Since $p_i\leq1$, we have $g_{(i)}\leq\sqrt{\lambda}$. If $\nu_i=0$, then from Eq.~(\ref{aeq:mui}), we have $p_i=1$ and $g_{(i)}\geq\sqrt{\lambda}$. Let $M$ be an integer such that $g_{(N-M)}<\sqrt{\lambda}$ and $g_{(N-M+1)} \geq \sqrt{\lambda}$. Then, from Eq.~(\ref{aeq:lambda}), we have: 
\begin{equation}
    \sqrt{\lambda} = \frac{\sum_{i=1}^{N-M}g_{(i)}}{S-M} \coloneq \alpha.
\end{equation}
Next, we can get:  
\begin{equation}
\sum_{i=1}^N\min ( g_{(i)}, \alpha)=\sum_{i=1}^{N-M}g_{(i)} + M\cdot\alpha = S\cdot\alpha.
\end{equation}
Therefore, for $i=1,\ldots,N-M$, we have:
\begin{equation}
p_i^\ast=\frac{g_{(i)}}{\alpha} = \frac{\min ( g_{(i)}, \alpha)\cdot S}{\sum_{j=1}^{N} \min ( g_{(j)}, \alpha )}.
\end{equation}
For $i=N-M+1,\ldots,N$, we have:
\begin{equation}
p_i^\ast=1=\frac{\alpha}{\alpha} = \frac{\min ( g_{(i)}, \alpha)\cdot S}{\sum_{j=1}^{N} \min ( g_{(j)}, \alpha )}.
\end{equation}
Thus, we derive the optimal probabilities. The proof is completed. 
\end{proof}
Proposition~\ref{prop:var_min} suggests that the optimal selection probability for each example is proportional to its gradient norm, and larger gradient magnitudes lead to higher probabilities. The probabilities are upper bounded by $\alpha$ to ensure that they remain within the valid range.

However, computing $\alpha$ requires sorting the full set of gradient norms, which incurs a time complexity of $\mathcal{O}(N \log N)$ and poses a bottleneck for large-scale datasets. To address this limitation, we propose a computationally efficient alternative that avoids sorting while maintaining the same optimal solution. 
Specifically, we iteratively identify and remove examples whose selection probabilities reach $1$, and redistribute the probabilities of the remaining examples.
Formally, let $\mathcal{R}\!=\!\left\{ \bm{x}_i\!\in\!\mathcal{D}\!\mid 0\!\leq\!p_i\!<\!1\!\right\}$ denote the set of examples with selection probabilities less than $1$, and let $\mathcal{R}_{\mathrm{nz}}\!\subseteq\!\mathcal{R}$ be the subset with \textit{non-zero} probabilities. We denote their sizes by $|\mathcal{R}|$ and $|\mathcal{R}_{\mathrm{nz}}|$, respectively. We initialize $\mathcal{R}\!=\!\mathcal{R}_{\mathrm{nz}}\!=\!\mathcal{D}$ and iteratively update the selection probabilities using:
\begin{equation} 
p_i^\ast = \frac{ \|\nabla_{\bm{W}} \ell(\bm{x}_i) \| \cdot (S-N+|\mathcal{R}|)}{\sum_{j\in\mathcal{R}_{\mathrm{nz}}} \|\nabla_{\bm{W}} \ell(\bm{x}_j)\|}, \quad \forall \ i\in{\mathcal{R}}_{\mathrm{nz}}.
\label{eq:eff_iter}
\end{equation}
Any probability value not less than $1$ is bounded at $1$, and the corresponding example is removed from the set $\mathcal{R}$. This process is repeated iteratively until all probabilities lie within the valid range~$[0, 1]$.
Although each iteration of Eq.~(\ref{eq:eff_iter}) has a computational complexity of $\mathcal{O}(|\mathcal{R}_{\mathrm{nz}}|)$, the required iteration number in practice is very small, resulting in lower overhead compared to the sorting-based approach.

\subsection{Spike-aware Importance Score}
\label{sec:spk_proxy}
According to Eq.~(\ref{eq:eff_iter}), the optimal selection probabilities that minimize gradient variance are proportional to the per-example gradient norm, aligning with the intuition that the gradient norm reflects each example’s true contribution to the training process. However, as analyzed in Section~\ref{subsec:imp_est}, performing a backward pass for each example to compute its gradient norm is prohibitively inefficient. 
%The common workaround that uses the per-example loss~\cite{infobatch} is inadequate due to the weak correlation between the loss and the gradient norm in SNNs (see Figure~\ref{fig:ilustration}(a)).
A common alternative is to use the per-example loss as a proxy~\cite{infobatch}, but this approach proves inadequate in SNNs due to the weak correlation between the loss and the gradient norm.

To address this challenge, we notice that the per-example errors  $\{\bm{\delta}_i^{l}[t]\}_{i,l,t}$ and the corresponding spikes $\{\bm{o}_i^{l- 1}[t]\}_{i, l, t}$ have been readily available after the backward pass of BPTT~\cite{wu2018spatio}. The per-example gradient norm can thus be reconstructed by computing the outer products between these errors and spikes, as described in Eq.~(\ref{eq:weight_grad}). However, calculating the outer products still incurs considerable computational cost.
 
To further eliminate the outer product operations, we observe that the objective of the variance minimization problem in Eq.~(\ref{eq:min_var}) is a function of the per-example gradient norm. This allows us to relax the minimization objective by introducing an efficient upper bound $G_i$ on the gradient norm, $G_i \geq \|\nabla_{\bm{W}} \ell(\bm{x}_i)\|$. Under this relaxation, the optimal selection probabilities are guaranteed to be similar to those derived from the original objective. To this end, we propose an outer-product-free upper bound, referred to as spike-aware importance score, which is formally defined in Proposition~\ref{prop:spikeAwareProxy}.
\begin{prop}
\label{prop:spikeAwareProxy}
The variance minimization objective in Eq.~(\ref{eq:min_var}) can be relaxed by replacing the per-example gradient norm \( \| \nabla_{\bm{W}} \ell(\bm{x}_i) \| \) with an upper bound \( G_i \), defined as:
\begin{equation} G_i = \sum_{l\in \mathbb{L}}\sum_{t=1}^T \| \bm{\delta}_i^{l}[t] \| \cdot \| \bm{o}_i^{l- 1}[t]\|, 
\label{eq:compute_g_i}
\end{equation} 
where $\bm{\delta}_i^{l}[t]$ and $\bm{o}_i^{l-1}[t]$ represent example $i$'s errors and spikes at time $t$ and layer $l$, respectively. 
The optimal selection probability for each example becomes $\hat{p}_i^{\ast} = \frac{G_i \cdot (S-N+|\mathcal{R}|)}{\sum_{j \in \mathcal{R}_{\mathrm{nz}}} G_j}$, $\forall \ i\in{\mathcal{R}}_{\mathrm{nz}}$.
\end{prop}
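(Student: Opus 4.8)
The plan is to establish the two claims packaged in the proposition separately: first, that $G_i$ is genuinely an upper bound on the per-example weight gradient norm, $G_i \geq \|\nabla_{\bm{W}}\ell(\bm{x}_i)\|$; and second, that substituting this bound into the variance objective of Eq.~(\ref{eq:min_var}) yields a relaxed problem whose optimum has exactly the stated form. The second claim becomes almost immediate once the first is in place, because the relaxed problem is structurally identical to the one already solved in Proposition~\ref{prop:var_min}.

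For the upper bound, I would start from the exact squared-norm expression in Eq.~(\ref{eq:weight_grad}), $\|\nabla_{\bm{W}}\ell(\bm{x}_i)\|^2 = \sum_{l\in\mathbb{L}} A_l^2$ with $A_l \coloneq \big\|\sum_{t=1}^{T}\bm{\delta}_i^{l}[t]\,\bm{o}_i^{l-1}[t]^{\top}\big\|$, and bound each $A_l$ by chaining two elementary facts: the triangle inequality for the Frobenius norm applied to the temporal sum, $A_l \leq \sum_{t}\big\|\bm{\delta}_i^{l}[t]\,\bm{o}_i^{l-1}[t]^{\top}\big\|$, followed by the outer-product identity $\|\bm{a}\bm{b}^{\top}\|_F = \|\bm{a}\|\,\|\bm{b}\|$ applied to each summand. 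This yields the per-layer bound $A_l \leq G_i^{l} \coloneq \sum_{t}\|\bm{\delta}_i^{l}[t]\|\,\|\bm{o}_i^{l-1}[t]\|$. To pass from the layerwise $\ell_2$ aggregation intrinsic to the gradient norm to the $\ell_1$ aggregation defining $G_i = \sum_l G_i^{l}$, I would then use $\sqrt{\sum_l A_l^2} \leq \sqrt{\sum_l (G_i^{l})^2} \leq \sum_l G_i^{l}$, where the first inequality is monotonicity of squaring on nonnegative reals and the second is the standard $\|\cdot\|_2 \leq \|\cdot\|_1$ bound. Collecting these gives $\|\nabla_{\bm{W}}\ell(\bm{x}_i)\| \leq G_i$, as claimed.

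For the relaxation step, I would observe that in the variance objective each example contributes $\frac{1-p_i}{p_i}\|\nabla_{\bm{W}}\ell(\bm{x}_i)\|^2$ with a nonnegative coefficient $\frac{1-p_i}{p_i} \geq 0$ on the feasible domain $0 < p_i \leq 1$. Hence replacing $\|\nabla_{\bm{W}}\ell(\bm{x}_i)\|$ by the larger quantity $G_i$ produces an objective $\frac{1}{N^2}\sum_i \frac{(1-p_i)G_i^2}{p_i}$ that dominates the true variance pointwise in $\mathbf{p}$, so minimizing it is a legitimate surrogate that controls $\Var[\hat{\nabla}_{\bm{W}}\mathcal{L}]$ from above. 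Crucially, this relaxed objective has the identical algebraic form to Eq.~(\ref{eq:min_var}) with $g_i$ replaced everywhere by $G_i$, optimized over the same constraint set $\sum_i p_i = S$, $0 \leq p_i \leq 1$. I would therefore invoke Proposition~\ref{prop:var_min} verbatim under this substitution, which gives the clipped-proportional optimum and, through the sorting-free iterative refinement of Eq.~(\ref{eq:eff_iter}), the stated closed form $\hat{p}_i^{\ast} = \frac{G_i\,(S-N+|\mathcal{R}|)}{\sum_{j\in\mathcal{R}_{\mathrm{nz}}} G_j}$.

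The main obstacle is the first part, specifically getting the chain of norm inequalities right: one must apply the triangle inequality at the level of matrices (Frobenius norm) rather than entrywise, correctly invoke the outer-product identity, and then justify the transition from the layerwise $\ell_2$ combination that the gradient norm forces to the $\ell_1$ combination defining $G_i$. Once that bound is secured, the optimization half requires no new work beyond noting structural identity with Proposition~\ref{prop:var_min}; the only point worth an explicit sentence is confirming that the surrogate objective upper-bounds the true variance, which is what justifies calling the substitution a relaxation.
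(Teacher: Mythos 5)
Your handling of the optimization half is correct and matches the paper: the surrogate objective $\frac{1}{N^2}\sum_i (1-p_i)G_i^2/p_i$ dominates the true variance pointwise on the feasible set, has the identical algebraic structure to Eq.~(\ref{eq:min_var}), and therefore inherits the solution of Proposition~\ref{prop:var_min} with $g_i$ replaced by $G_i$, which via the iterative scheme of Eq.~(\ref{eq:eff_iter}) yields the stated $\hat{p}_i^{\ast}$. Your derivation of the bound itself --- triangle inequality over the temporal sum, the exact identity $\|\bm{a}\bm{b}^{\top}\| = \|\bm{a}\|\,\|\bm{b}\|$ for outer products, then $\ell_2 \leq \ell_1$ across layers --- is also precisely the paper's argument, but only \emph{for fully-connected layers}.

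The gap is that your argument silently assumes every layer's gradient is a sum of outer products $\bm{\delta}_i^{l}[t]\,\bm{o}_i^{l-1}[t]^{\top}$, i.e., Eq.~(\ref{eq:weight_grad}) taken literally. For convolutional layers --- which is what the paper's experiments actually use (VGG, ResNet, SpikeFormer) --- weight sharing changes the structure: the kernel gradient is $\sum_{t} \bm{\delta}_i^{l}[t]\,\mathrm{unfold}(\bm{o}_i^{l-1}[t])^{\top}$, where $\mathrm{unfold}(\cdot)$ extracts overlapping patches and thus duplicates input entries. Because of this duplication, $\|\mathrm{unfold}(\bm{o}_i^{l-1}[t])\|$ can strictly exceed $\|\bm{o}_i^{l-1}[t]\|$, so the per-layer inequality $\|\nabla_{\bm{W}^l}\ell(\bm{x}_i)\| \leq \sum_{t}\|\bm{\delta}_i^{l}[t]\|\,\|\bm{o}_i^{l-1}[t]\|$ fails in general, and $G_i$ as defined in Eq.~(\ref{eq:compute_g_i}) need not upper bound the gradient norm. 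The paper's proof devotes its second half to exactly this case: it bounds $\|\mathrm{unfold}(\bm{o}_i^{l-1}[t])\| \leq \sqrt{N_{\mathrm{patch}}^{l}}\,\|\bm{o}_i^{l-1}[t]\|$ and absorbs the factor into a layer-independent constant $N_{\mathrm{patch}}^{\ast} = \max_{l\in\mathbb{L}}\sqrt{N_{\mathrm{patch}}^{l}}$, so that the true upper bound is $N_{\mathrm{patch}}^{\ast}\cdot G_i$ rather than $G_i$; since this constant is common to all examples, it cancels in the normalized probabilities (and rescaling all scores in Proposition~\ref{prop:var_min} rescales the clipping threshold $\alpha$ identically, leaving the optimal $p_i^{\ast}$ unchanged), so the stated conclusion survives. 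Your proof needs this additional step --- or an explicit restriction to fully-connected networks --- to be complete.
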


\begin{proof}
We start by relaxing the original objective in Eq.~(\ref{eq:min_var}), which aims to minimize the variance of the gradient estimator, by introducing an upper bound $G_i \geq \|\nabla_{\bm{W}} \ell(\bm{x}_i)\|$ for each training example $i$. This yields the following relaxed objective:
\begin{equation}
\min_{\mathbf{p}}\quad\sum_{i=1}^N \frac{(1-p_i)\|\nabla_{\bm{W}}\ell\left(\bm{x}_i\right)\|^2}{p_i} \leq \min_{\mathbf{p}}\quad\sum_{i=1}^N \frac{(1-p_i) G_i^2}{p_i}.
\label{app:eq:relaxed_prob}
\end{equation}
To make this surrogate practically useful, we now derive a form of $G_i$ that is significantly cheaper to compute than the exact gradient norm. We consider two common cases: fully-connected and convolutional layers.

For a fully-connected layer $l$, the gradient with respect to the weights can be expressed as:
\begin{equation}
\nabla_{\bm{W}^l} \ell(\bm{x}_i) = \sum_{t=1}^T \bm{\delta}_i^l[t] \cdot \bm{o}_i^{l-1}[t]^\top.
\end{equation}
% where $\bm{\delta}_i^l[t] = \frac{\partial \ell(\bm{x}_i)}{\partial \bm{u}^l[t]}$ is the error signal at layer $l$ and time $t$, and $\bm{o}_i^{l-1}[t]$ is the spike output from the previous layer. 
By the triangle inequality and sub-multiplicativity of the norm, we obtain:
\begin{align}
\|\nabla_{\bm{W}^l} \ell(\bm{x}_i)\| 
&= \left\| \sum_{t=1}^T \bm{\delta}_i^l[t] \cdot \bm{o}_i^{l-1}[t]^\top \right\| \notag \\
&\leq \sum_{t=1}^T \| \bm{\delta}_i^l[t] \cdot \bm{o}_i^{l-1}[t]^\top \| \notag \\
&\leq \sum_{t=1}^T \| \bm{\delta}_i^l[t] \| \cdot \| \bm{o}_i^{l-1}[t] \|.
\end{align}

For convolutional layers, the weight sharing mechanism changes the structure of the gradient. Each weight kernel slides over spatial regions of the input, and input elements are reused across multiple patches. The gradient with respect to the convolutional kernel is given by:
\begin{equation}
\nabla_{\bm{W}^l} \ell(\bm{x}_i) = \sum_{t=1}^T \bm{\delta}_i^l[t] \cdot \mathrm{unfold}(\bm{o}_i^{l-1}[t])^\top,
\end{equation}
where $\mathrm{unfold}(\cdot)$ extracts sliding local patches from the spike tensor and flattens them into columns. Applying norm inequalities again yields:
\begin{align}
\|\nabla_{\bm{W}^l} \ell(\bm{x}_i)\|
&\leq \sum_{t=1}^T \| \bm{\delta}_i^l[t] \| \cdot \| \mathrm{unfold}(\bm{o}_i^{l-1}[t]) \|.
\end{align}
We now bound the unfolded input norm. Since unfolding duplicates entries across overlapping patches, the norm of the unfolded input can be upper bounded by:
\begin{equation}
\| \mathrm{unfold}(\bm{o}_i^{l-1}[t]) \| \leq \sqrt{N_{\mathrm{patch}}^l} \cdot \| \bm{o}_i^{l-1}[t] \|,
\end{equation}
where $N_{\mathrm{patch}}^l$ denotes the number of patches in layer $l$. By substituting, we obtain:
\begin{align}
\|\nabla_{\bm{W}^l} \ell(\bm{x}_i)\| 
&\leq \sqrt{N_{\mathrm{patch}}^l} \cdot \left(\sum_{t=1}^T \| \bm{\delta}_i^l[t] \| \cdot \| \bm{o}_i^{l-1}[t] \|\right).
\end{align}
Let $\mathbb{L}$ denote the set of trainable layers. Then the total gradient norm satisfies:
\begin{align}
\| \nabla_{\bm{W}} \ell(\bm{x}_i) \| 
&= \sqrt{ \sum_{l \in \mathbb{L}} \| \nabla_{\bm{W}^l} \ell(\bm{x}_i) \|^2 } \notag \\
&\leq \sum_{l \in \mathbb{L}} \| \nabla_{\bm{W}^l} \ell(\bm{x}_i) \| \notag \\
&\leq N_{\mathrm{patch}}^\ast \cdot \left(\sum_{l \in \mathbb{L}} \sum_{t=1}^T \| \bm{\delta}_i^l[t] \| \cdot \| \bm{o}_i^{l-1}[t] \|\right),
\end{align}
where $N_{\mathrm{patch}}^\ast\!=\!\max_{l \in \mathbb{L}} \sqrt{N_{\mathrm{patch}}^l}$ is a layer-independent constant that can be ignored when computing normalized probabilities.

We thus obtain the following upper bound:
\begin{equation}
G_i = \sum_{l \in \mathbb{L}} \sum_{t=1}^T \| \bm{\delta}_i^l[t] \| \cdot \| \bm{o}_i^{l-1}[t] \|,
\end{equation}
which uses quantities already available during BPTT and is cheap to compute. The optimal per-example selection probability thus becomes:
\begin{equation}
\hat{p}_i^\ast = \frac{G_i \cdot (S - N + |\mathcal{R}|)}{\sum_{j \in \mathcal{R}_{\mathrm{nz}}} G_j}, \quad \forall \ i\in{\mathcal{R}}_{\mathrm{nz}}.
\end{equation}
The proof is completed.
\end{proof}
The spike-aware importance score eliminates the need for outer products  and captures the joint influence of binary spikes and spatio-temporal dependent errors, resulting in a strong correlation with the per-example gradient norm. In addition, both components required to compute the score are readily available after the backward pass of BPTT, making the method highly efficient and well-suited for large-scale SNN training.

\subsection{Smoothing Selection Probabilities} 
\label{sec:smoothing}
Another issue is that examples with very low selection probabilities can produce training instability when selected. This instability arises because, to ensure unbiased gradient estimation, the loss of each selected example is scaled by the inverse of its selection probability, as shown in Eq.~(\ref{eq:est_w_grad}). 
SADP computes selection probabilities in proportion to spike-aware importance scores, and consequently, examples with low scores are assigned very small probabilities. When they are occasionally sampled, their scaled gradients become disproportionately large, potentially destabilizing the training process. 

To address this issue, we introduce a smoothing mechanism that enforces a minimum probability. Specifically, if the smallest probability falls below a predefined constant $\beta$, we add an offset~$\gamma$ to every spike-aware score $G_i$ such that the smallest probability equals $\beta$. The offset $\gamma$ is calculated by:
\begin{equation} 
\frac{(G_{\mathrm{min}} + \gamma) \cdot (S - N + |\mathcal{R}|)}{\sum_{j \in \mathcal{R}_{\mathrm{nz}}} (G_j + \gamma)} = \beta, 
\label{eq:compute_offset}
\end{equation}
where $G_{\mathrm{min}}$ is the smallest non-zero score among all examples in $\mathcal{R}$. The smoothed probability is then given by:
\begin{equation} 
p_i = \frac{(G_i + \gamma) \cdot (S - N + |\mathcal{R}|)}{\sum_{j \in \mathcal{R}_{\mathrm{nz}}} (G_j + \gamma)}, \quad \forall \ i\in{\mathcal{R}}_{\mathrm{nz}}.
\label{eq:compute_smoothed_pi}
\end{equation} 
This smoothing strategy prevents excessive gradient scaling from low-probability examples while maintaining low variance, thereby contributing to more stable and efficient SNN training.

\subsection{Dynamic Pruning Schedule}
\label{sec:schedule}
As training progresses, a growing number of examples become well-learned and contribute less to gradient updates. Motivated by this intuition, we propose a dynamic pruning schedule in which the pruning ratio increases over the course of training: fewer examples are pruned in the early epochs to allow the model to learn broadly from diverse data, while more aggressive pruning is applied in later epochs when less informative examples dominate. The average pruning ratio over epochs keeps unchanged.

Specifically, let $K$ denote the number of training epochs. $r$ and $r_{\mathrm{max}}$ represent the average and maximum pruning ratios, respectively. The pruning ratio $r_k$ at the $k$-th epoch increase linearly from $2r-r_{\mathrm{max}}$ to $r_{\mathrm{max}}$ over the course of training:
\begin{equation}
r_k = 2r - r_{\mathrm{max}} + \frac{k(2r_{\mathrm{max}}-2r)}{K}, \quad k = 1, \ldots, K.
\label{eq:compute_rk}
\end{equation}

\section{Experiments}
\label{sec:exp}
This section presents a comprehensive evaluation of SADP across a wide range of datasets and architectures. In Section~\ref{subsec:per_evl}, we compare SADP with data pruning baselines in terms of accuracy and training efficiency. Section~\ref{subsec:ablation} investigates the contribution of individual components through detailed ablation studies. Section~\ref{subsec:grad_analysis} further analyzes gradient approximation and variance reduction, and evaluates the proposed smoothing mechanism for addressing gradient scaling. Finally, Section~\ref{subsec:broad_app} demonstrates the broad applicability and compatibility of SADP across diverse SNN methods. 

\begin{table*}[!t]
\caption{Training configurations and hyperparameter settings.}
\label{tab:train_details}
\centering
\begin{tabular}{cccccccccc}
\hline\hline
Dataset &
  \# Epochs &
  Optimizer &
  \begin{tabular}[c]{@{}c@{}}Learning \\ Rate\end{tabular} &
  \begin{tabular}[c]{@{}c@{}}Learning Rate \\ Schedule\end{tabular} &
  \begin{tabular}[c]{@{}c@{}}Batch \\ Size\end{tabular} &
  \begin{tabular}[c]{@{}c@{}}Weight \\ Decay\end{tabular} &
  \begin{tabular}[c]{@{}c@{}}Neuronal\\  Decay $\lambda$\end{tabular} &
  Threshold $\vartheta$&
  \begin{tabular}[c]{@{}c@{}}\# Time \\ Steps (T)\end{tabular} \\ \hline
CIFAR-10            & 200 & SGD   & 0.2  & Cosine Annealing & 128 & $0.00005$ & 0.1 & 1.0 & 4  \\
CIFAR-100           & 200 & SGD   & 0.2  & Cosine Annealing & 128 & $0.00005$ & 0.1 & 1.0 & 4  \\
ImageNet (Pretrain) & 100 & SGD   & 0.25 & Cosine Annealing & 512 & $0.00001$ & 0.2 & 1.0 & 1  \\
ImageNet (Finetune) & 10  & SGD   & $0.001$ & Cosine Annealing & 128 & 0    & 0.2 & 1.0 & 4  \\
CIFAR10-DVS        & 100 & AdamW & $0.001$ & Cosine annealing & 100 & $0.0005$ & 0.1 & 1.0 & 10 \\
HAR-DVS            & 100 & AdamW  & $0.001$ & Cosine Annealing & 100 & $0.0005$ & 0.1 & 1.0 & 4  \\ \hline \hline
\end{tabular}
\vspace{-3mm}
\end{table*}

\begin{table}[!t]
\caption{Hyperparameter settings of SADP, including the smoothing constant $\beta$ and the maximum pruning ratio $r_{\mathrm{max}}$.}
\label{app:tab:setup_sadp}
\centering
\setlength{\tabcolsep}{20pt}
\begin{tabular}{c|cc}
\hline
\hline
Pruning Ratio (\%)                               & $\beta$ & $r_{\mathrm{max}} (\%)$ \\ \hline
30                   & 0.35  & 60\\ 
50  & 0.3  & 70 \\
70 & 0.2   & 90\\
90      & 0.05 &  100 \\\hline \hline
\end{tabular}
\end{table}

\subsection{Performance Evaluation}
\label{subsec:per_evl}
\subsubsection{Experimental Setups} \textbf{Datasets.} We evaluate SADP on image datasets (CIFAR-10~\cite{cifar}, CIFAR-100~\cite{cifar}, and ImageNet~\cite{deng2009imagenet}) and neuromorphic datasets (CIFAR-10-DVS~\cite{li2017cifar10} and HAR-DVS~\cite{hardvs}) using varying pruning ratios. CIFAR-10~\cite{cifar} and CIFAR-100~\cite{cifar} contain $50,000$ training images and $10,000$ test images, divided into $10$ and $100$ classes, respectively. Standard data augmentation is applied to the training set~\cite{9328869,plif}, including padding each image by $4$ pixels on all sides, followed by a $32\!\times\!32$ crop and random horizontal flipping. Additionally, we employ autoaugment and cutout techniques for further data augmentation~\cite{wang2023adaptive}. ImageNet~\cite{deng2009imagenet} comprises $1,000$ classes, with $1.2$ million images for training and $50,000$ images for validation. Standard data augmentation techniques are used~\cite{sewresnet}. CIFAR10-DVS~\cite{li2017cifar10}, derived from CIFAR-10, is created by scanning each image through repetitive closed-loop movements in front of a DVS camera~\cite{4444573}. It includes $9,000$ training samples and $1,000$ testing samples, each with a spatial resolution of $128\!\times\!128$, which is resized to $48\!\times\!48$. CIFAR10-DVS retains the $10$ classes of CIFAR-10, and no data augmentation techniques are applied to this dataset. HAR-DVS~\cite{hardvs} is an event-based human activity recognition~(HAR) dataset recorded using the DAVIS346 camera at a spatial resolution of $346\times260$. As the \textit{largest} event-based HAR dataset, it encompasses $300$ activity classes with a total of $107,646$ samples. 

\textbf{Network Architectures.} Spiking variants of VGG~\cite{simonyan2014very}, ResNet~\cite{he2016deep, sewresnet}, and Transformer~\cite{yao2024spikedriven} are adopted.

\textbf{Training Settings.} Table~\ref{tab:train_details} summarizes training settings. The hyperparameter setup for SADP is provided in Table~\ref{app:tab:setup_sadp}, and they are selected based on the ablation studies in Section~\ref{subsec:ablation}. We use the same hyperparameters for all datasets at the same pruning ratio. For the experiments on ImageNet with the $35\%$ pruning ratio, the smoothing constant is set to $0.25$, and the maximum pruning ratio is $55\%$. We use the final layer of each network to compute importance scores for computational efficiency. Note that we adopt a two-step training process for ImageNet experiments~\cite{sltt}, which includes a pre-training phase with a single time step for $100$ epochs on SEW-ResNet34~\cite{sewresnet} and $200$ epochs on Meta-SpikeFormer~\cite{yao2024spikedriven}, followed by a fine-tuning phase with $4$ time steps for $10$ epochs on SEW-ResNet34 and $50$ epochs on Meta-SpikeFormer.

\begin{figure}[!t]
\centering
\includegraphics[width=0.92\linewidth]{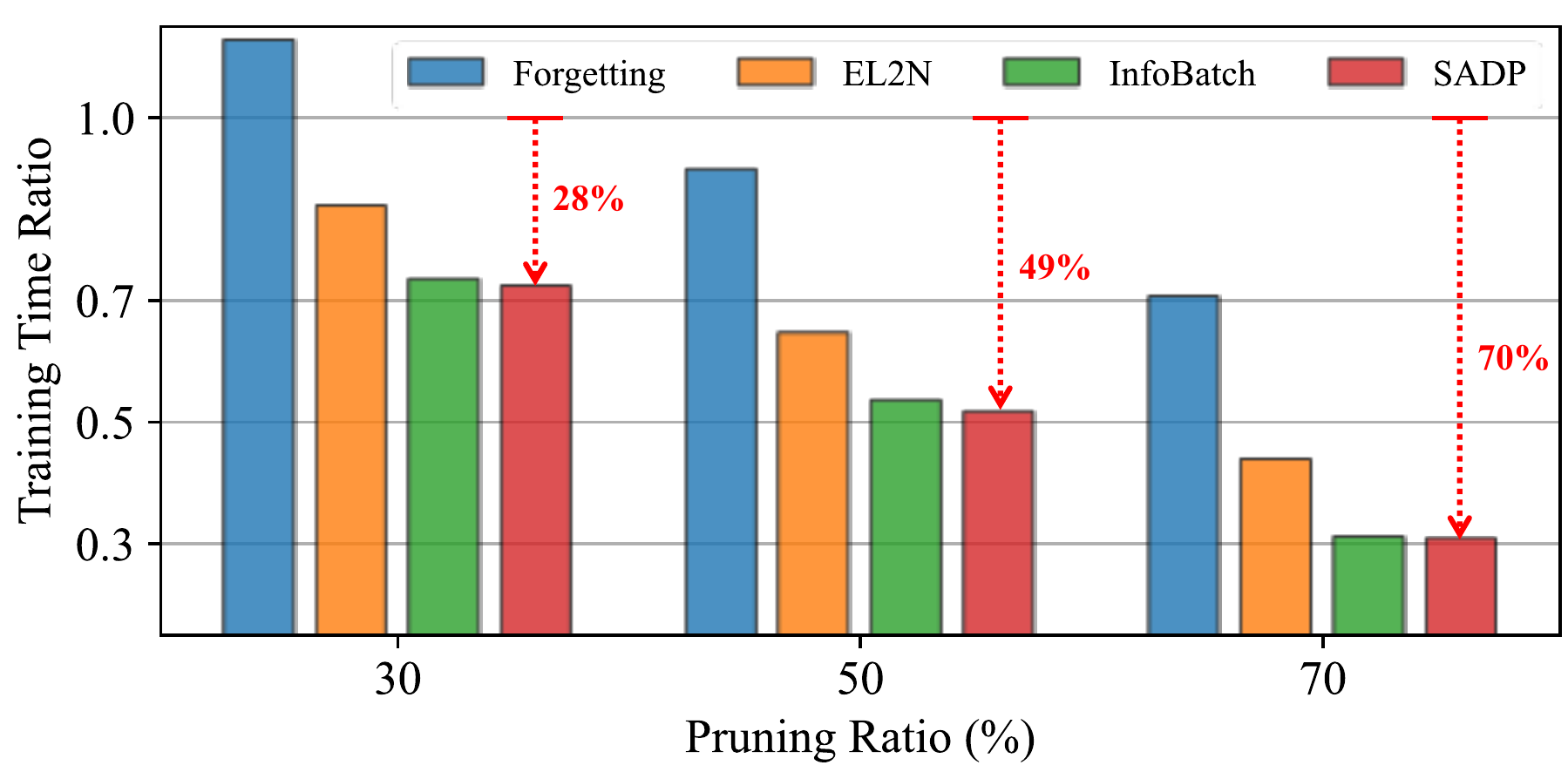}
\vspace{-2mm}
\captionof{figure}{Comparison of training efficiency on CIFAR-100. The y-axis represents the training time ratio relative to the full-data training. The theoretical maximum training time ratio is equal to the specified pruning ratio.}
\label{fig:train_time_ratio}
\end{figure}

\textbf{Baselines.} We compare SADP against existing data pruning methods, including Forgetting~\cite{forgetting}, EL2N~\cite{el2n}, and InfoBatch~\cite{infobatch}, as well as two random baselines (Random Pruning and Reduced Epoch). Random Pruning randomly selects examples at each epoch, while Reduced Epoch uses fewer epochs to match the training iteration number in pruning-based methods. Although these methods were designed for ANNs, they are applicable to SNNs due to their model-agnostic data selection strategies. For fair comparisons, all baseline methods use the same training settings as SADP. 
%In the experiments with InfoBatch~\cite{infobatch}, we carefully adjust the pruning probability hyperparameter to achieve the actual pruning ratio similar to that of other data pruning methods. For larger pruning ratios, such as $70\%$ and $90\%$, we reduce the number of training epochs to align with the expected number of iterations, as recommended in the original paper~\cite{infobatch}.

\subsubsection{SADP Attains State-of-the-Art Accuracy Across Different Pruning Ratios} 
As shown in Table~\ref{tab:baseline}, SADP consistently outperforms all baseline methods across different pruning ratios, datasets, and architectures. Notably, the accuracy advantage of SADP becomes more pronounced at higher pruning ratios, demonstrating its robustness in retaining informative examples under more aggressive data reduction. Additionally, SADP prunes 30\% of the training data on CIFAR-100 and 70\% on CIFAR10-DVS without compromising accuracy. These results highlight the strong generalization of SADP and its effectiveness in preserving model accuracy.

\begin{table*}[!t]
\centering
\caption{Comparison of SADP, existing data pruning methods, and random baselines (Random Pruning and Reduced Epoch) under varying pruning ratios on both image and neuromorphic datasets. Reported values are the mean accuracies and standard deviations over three independent runs.}
\label{tab:baseline}
\resizebox{\textwidth}{!}{%
\setlength{\tabcolsep}{2pt}
\begin{tabular}{l|ccc|ccc|ccc|ccc}
\hline
\hline
Dataset            & \multicolumn{3}{c|}{CIFAR-10}         & \multicolumn{3}{c|}{CIFAR-100}        & \multicolumn{3}{c|}{CIFAR10-DVS}     & \multicolumn{3}{c}{HAR-DVS} \\ \hline
Net (Time Window)            & \multicolumn{3}{c|}{ResNet18 (T=4)}         & \multicolumn{3}{c|}{ResNet19 (T=4)}        & \multicolumn{3}{c|}{VGG11 (T=10)}     & \multicolumn{3}{c}{ResNet18 (T=4)} \\ \hline
Full-data Accuracy          & \multicolumn{3}{c|}{$95.33_{\pm0.04}$}      & \multicolumn{3}{c|}{$80.02_{\pm0.02}$}      & \multicolumn{3}{c|}{$77.83_{\pm0.26}$}      & \multicolumn{3}{c}{$49.03$}   \\ \hline
Pruning Ratio (\%) & 30         & 50         & 70         & 30         & 50         & 70         & 50         & 70         & 90         & 30      & 50      & 70      \\ \hline
Reduced Epoch      & $94.91_{\pm0.08}$ & $94.29_{\pm0.11}$ & $92.93_{\pm0.05}$ & $79.24_{\pm0.07}$ & $78.19_{\pm0.11}$ & $76.19_{\pm0.11}$ & $77.53_{\pm0.29}$ & $76.80_{\pm0.29}$ & $71.37_{\pm0.29}$ & $48.20$   & $46.50$   & $42.23$   \\
Forgetting~\cite{forgetting}         & $95.11_{\pm0.17}$ & $94.47_{\pm0.09}$ & $92.30_{\pm0.19}$ & $78.15_{\pm0.17}$ & $75.07_{\pm0.31}$ & $67.88_{\pm0.30}$ & $70.67_{\pm0.83}$ & $54.63_{\pm2.74}$ & $38.30_{\pm0.94}$ & $46.80$   & $43.78$   & $34.37$   \\
EL2N~\cite{el2n}               & $95.16_{\pm0.04}$ & $94.63_{\pm0.10}$ & $92.07_{\pm0.18}$ & $78.10_{\pm0.12}$ & $73.51_{\pm0.12}$ & $61.48_{\pm0.13}$ & $70.90_{\pm0.41}$ & $61.60_{\pm0.14}$ & $43.97_{\pm0.91}$ & $48.08$   & $46.14$   & $35.49$   \\
Random Pruning     & $94.97_{\pm0.06}$ & $94.25_{\pm0.03}$ & $92.96_{\pm0.15}$ & $79.30_{\pm0.06}$ & $78.24_{\pm0.05}$ & $76.04_{\pm0.08}$ & $77.63_{\pm0.05}$ & $77.03_{\pm0.37}$ & $71.47_{\pm0.12}$ & $48.25$   & $46.58$   & $42.87$   \\
InfoBatch~\cite{infobatch}          & $94.99_{\pm0.16}$ & $94.30_{\pm0.11}$ & $93.02_{\pm0.14}$ & $79.57_{\pm0.12}$ & $78.12_{\pm0.16}$ & $76.24_{\pm0.17}$ & $77.90_{\pm0.16}$ & $77.00_{\pm0.59}$ & $72.30_{\pm0.57}$ & $48.19$   & $46.84$   & $43.27$   \\
SADP &
  $\mathbf{95.22_{\pm0.06}}$ &
  $\mathbf{94.82_{\pm0.05}}$ &
  $\mathbf{93.65_{\pm0.03}}$ &
  $\mathbf{80.01_{\pm0.07}}$ &
  $\mathbf{78.63_{\pm0.09}}$ &
  $\mathbf{77.10_{\pm0.12}}$ &
  $\mathbf{77.97_{\pm0.74}}$ &
  $\mathbf{77.80_{\pm0.70}}$ &
  $\mathbf{74.80_{\pm0.14}}$ &
  $\mathbf{48.51}$ &
  $\mathbf{47.36}$ &
  $\mathbf{44.30}$ \\ \hline
\hline
\end{tabular}%
}
\vspace{-2mm}
\end{table*}

\begin{table*}[!t]
\caption{Comparison of accuracy and wall-clock training time on ImageNet at a $35\%$ pruning ratio. Results are reported with SEW-ResNet34 for 110 epochs and Meta-SpikeFormer-31M for 250 epochs. The wall-clock time (in hours) is calculated as the time per GPU multiplied by the number of GPUs.}
\label{tab:imagenet}
\centering
% \resizebox{0.90\textwidth}{!}{%
\setlength{\tabcolsep}{3pt}
\begin{tabular}{l|cccc|ccc}
\hline
\hline
Net  & \multicolumn{4}{c|}{SEW-ResNet34~\cite{sewresnet} (T=4)}                             & \multicolumn{3}{c}{Meta-SpikeFormer-31M~\cite{yao2024spikedriven} (T=4)}      \\ \hline
Method &
  \multicolumn{1}{c|}{Full Data} &
  Forgetting~\cite{forgetting} &
  EL2N~\cite{el2n} &
  SADP &
  \multicolumn{1}{c|}{Full Data} &
  InfoBatch~\cite{infobatch} &
  SADP \\ \hline
Accuracy & \multicolumn{1}{c|}{68.40} & 65.76 & 66.97  & \textbf{68.29} & \multicolumn{1}{c|}{77.41} & 76.46 & \textbf{76.94} \\
Wall-clock Time (Hours) &
  \multicolumn{1}{c|}{258} &
  209 ($\downarrow$19.0\%) &
  200 ($\downarrow$22.5\%) &
  
  \textbf{168 ($\downarrow$34.9\%)} &
  \multicolumn{1}{c|}{688} &
  456 ($\downarrow$33.7\%) &
  \textbf{448 ($\downarrow$34.9\%)} \\ \hline
\hline
\end{tabular}%
% }
\vspace{-3mm}
\end{table*}

\subsubsection{SADP Achieves the Theoretical Maximum Training Speedup Owing to Negligible Overhead} 
To assess the efficiency of SADP in accelerating SNN training, we measure its wall-clock training time on CIFAR-100 and compare it against baseline methods. Theoretically, a data pruning method with zero overhead can reduce training time by the pruning ratio. However, as shown in Figure~\ref{fig:train_time_ratio}, both Forgetting~\cite{forgetting} and EL2N~\cite{el2n} deviate from this expectation due to the high cost of computing their importance scores. In contrast, SADP consistently achieves training time reductions that closely match the pruning ratios. This indicates that SADP incurs negligible overhead and is highly efficient in practice. Its ability to achieve the theoretical maximum training speedup highlights SADP’s practicality for accelerating SNN training.

\subsubsection{SADP Scales Effectively to Large-scale Datasets}
We evaluate the scalability of SADP on the large-scale ImageNet dataset using two representative architectures: SEW-ResNet34~\cite{sewresnet} and Meta-SpikeFormer~\cite{yao2024spikedriven}. Table~\ref{tab:imagenet} shows that SADP consistently outperforms all baselines in both accuracy and training efficiency. Remarkably, it achieves accuracy comparable to full-data training while reducing training time by 35\% on both architectures. These results demonstrate not only SADP’s effectiveness on large-scale datasets, but also its generalizability across different SNN architectures.

\subsubsection{SADP Significantly Outperforms Baselines at High Pruning Ratios}

% \begin{table}[!htb]
% \caption{Comparison of SADP and baselines at high pruning ratios on CIFAR-10. \textcolor{red}{Table is too small. You may change the x and y-axis}}
% \label{app:tab:high_ratio}
% \centering
% \resizebox{\linewidth}{!}{%
% \begin{tabular}{@{}l|ccccc@{}}
% \hline
% \hline
% Pruning Ratio (\%) & Random Pruning & Forgetting~\cite{forgetting} & EL2N~\cite{el2n}       & InfoBatch~\cite{infobatch}  & SADP                \\ \hline
% 80          & $91.37_{\pm0.12}$     & $87.41_{\pm0.60}$ & $87.51_{\pm0.15}$ & $91.32_{\pm0.10}$ & $\mathbf{92.23_{\pm0.10}}$ \\
% 90          & $87.29_{\pm0.10}$     & $71.13_{\pm0.45}$ & $68.28_{\pm1.41}$ & $87.32_{\pm0.17}$ & $\mathbf{89.04_{\pm0.05}}$ \\ \hline
% \hline
% \end{tabular}%
% }
% \end{table}

\begin{table}[!t]
\caption{Comparison of SADP and baselines at high pruning ratios on CIFAR-10.}
\label{app:tab:high_ratio}
\centering
\setlength{\tabcolsep}{10pt}
\begin{tabular}{l|cc}
\hline
\hline
Method & 80\% & 90\% \\ \hline
Random Pruning & $91.37_{\pm0.12}$ & $87.29_{\pm0.10}$ \\
Forgetting~\cite{forgetting} & $87.41_{\pm0.60}$ & $71.13_{\pm0.45}$ \\
EL2N~\cite{el2n} & $87.51_{\pm0.15}$ & $68.28_{\pm1.41}$ \\
InfoBatch~\cite{infobatch} & $91.32_{\pm0.10}$ & $87.32_{\pm0.17}$ \\
SADP & $\mathbf{92.23_{\pm0.10}}$ & $\mathbf{89.04_{\pm0.05}}$ \\ \hline
\hline
\end{tabular}
\vspace{-2mm}
\end{table}

\begin{table}[!t]
\centering
\caption{Influence of SADP components.}
\label{tab:sadp_components}
\setlength{\tabcolsep}{10pt}
\begin{tabular}{l|cc}
\hline \hline
Dataset                               & CIFAR-10 & CIFAR-100 \\ \hline
Pruning Ratio (\%)                    & 90  & 70\\ \hline
SADP                                 & $89.04_{\pm0.05}$  & $77.10_{\pm0.12}$ \\
SADP w/ Loss Score & $88.37_{\pm0.16}$   & $76.68_{\pm0.13}$\\
SADP w/o Pruning Schedule      & $88.17_{\pm0.13}$ &  $76.49_{\pm0.12}$ \\
SADP w/o Smoothing           & $83.37_{\pm0.27}$  & $72.16_{\pm0.09}$ \\ \hline \hline
\end{tabular}
\vspace{-1mm}
\end{table}

\begin{table}[!t]
\centering
\caption{Influence of the smooth constant $\beta$ and the maximum pruning ratio $r_{\mathrm{max}}$ in SADP on CIFAR-10 at a 50\% pruning ratio.}
\setlength{\tabcolsep}{12pt}
\begin{tabular}{l|c|c|c}
\hline\hline
$\beta$ & Accuracy (\%) & $r_{\mathrm{max}}$ & Accuracy (\%) \\ \hline
0   & $92.36_{\pm0.17}$ & 55\% & $94.62_{\pm0.15}$ \\
0.1 & $94.46_{\pm0.06}$ & 60\% & $94.63_{\pm0.05}$ \\
0.2 & $94.58_{\pm0.03}$ & 65\% & $94.72_{\pm0.05}$ \\
0.3 & $\mathbf{94.82_{\pm0.05}}$ & 70\% & $\mathbf{94.82_{\pm0.05}}$ \\
0.4 & $94.69_{\pm0.10}$ & 75\% & $94.81_{\pm0.08}$ \\
0.5 & $94.47_{\pm0.06}$ & 80\% & $94.75_{\pm0.08}$ \\ \hline\hline
\end{tabular}
\label{tab:smooth_constant}
\end{table}

To evaluate the effectiveness of SADP under more challenging pruning conditions, we compare its performance with existing data pruning methods at high pruning ratios (80\% and 90\%) on CIFAR-10 using ResNet18. As shown in Table~\ref{app:tab:high_ratio}, SADP consistently outperforms all baselines across both pruning levels. For instance, at the 90\% ratio, SADP achieves a 1.7\% higher accuracy than InfoBatch and exceeds EL2N by more than 20\%. These results highlight SADP’s robustness and its superior ability to retain essential informative training examples in excessive pruning regimes where existing methods degrade sharply.

\subsection{Ablation Studies}
\label{subsec:ablation}

\subsubsection{Influence of SADP Components} 
We evaluate the impact of each component in SADP. As shown in Table~\ref{tab:sadp_components}, replacing the spike-aware importance score with loss or removing the dynamic pruning schedule leads to noticeable accuracy drops, especially at higher pruning ratios. Removing the smoothing mechanism causes the most severe degradation, highlighting its role in stabilizing training. These results confirm that all three components are crucial to SADP’s effectiveness.

\subsubsection{Influence of Smoothing Constant}
We investigate the effect of the smoothing constant $\beta$, which balances gradient variance reduction and training stability. Table~\ref{tab:smooth_constant} shows that setting $\beta\!=0$ leads to a sharp drop in accuracy. Increasing $\beta$ improves stability and accuracy, with performance peaking at $\beta\!=0.3$, indicating an optimal trade-off. Notably, this optimal setting generalizes well across datasets and architectures, as evidenced by the superior accuracy in Table~\ref{tab:baseline}.

\subsubsection{Influence of Dynamic Pruning Schedule} 
We analyze the impact of the maximum pruning ratio $r_{\mathrm{max}}$ in the dynamic pruning schedule. Table~\ref{tab:smooth_constant} shows that increasing $r_{\mathrm{max}}$ generally improves accuracy, indicating that preserving more data in the early stages is crucial for better accuracy than later epochs. However, setting $r_{\mathrm{max}}$ too high reduces data availability in the final epochs, leading to diminishing returns. Notably, the optimal setting also generalizes well across datasets and architectures.

\begin{figure}[!t]
\centering
\includegraphics[width=0.89\linewidth]{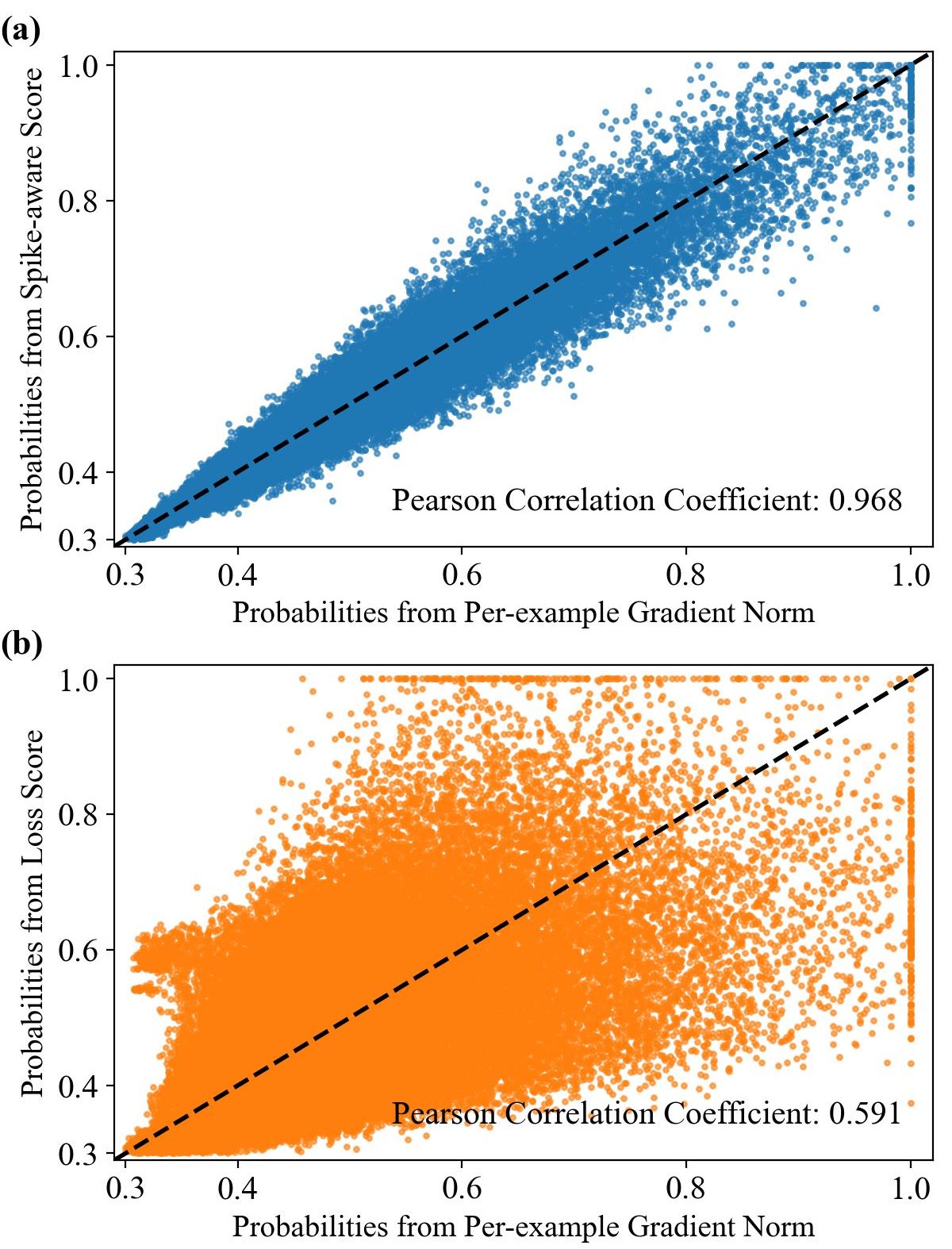}
\vspace{-1.0mm}
\caption{Comparison of selection probabilities derived from (a) the spike-aware importance score and (b) the loss score, against target probabilities based on the per-example gradient norm. The CIFAR10 dataset with a 50\% pruning ratio and a smoothing constant of 0.3 is adopted. Pearson
correlation coefficients are provided in the legend.}
\label{fig:grad_norm_proxy}
\end{figure}

\subsection{Empirical Analysis of Gradient Approximation, Variance Minimization, and Scaling Factors}
\label{subsec:grad_analysis}
\subsubsection{Spike-aware Importance Score Strongly Correlates with the Per-example Gradient Norm}
To assess the efficacy of our spike-aware score as a proxy for the gradient norm, we compare the selection probabilities derived from the spike-aware score, the loss score, and the gradient norm for all examples on CIFAR-10. Figure~\ref{fig:grad_norm_proxy} shows that the spike-aware score exhibits a significantly stronger correlation with the gradient norm compared to the loss score. This confirms that our score more accurately captures each example's training contribution, thereby enabling effective data pruning.

%The primary discrepancies arise with examples having small gradient norms, which should be assigned small probabilities. In these cases, the loss proxy incorrectly assigns larger probabilities, as the loss values do not reflect the true gradient magnitudes. In contrast, our spike-aware proxy effectively accounts for the influence of binary spikes and provides a better approximation of the gradient norm, resulting in more appropriate probabilities for these examples.

\subsubsection{SADP Effectively Reduces Gradient Variance} 
We evaluate the gradient variance of SADP in comparison to Random Pruning and InfoBatch~\cite{infobatch} on CIFAR-10~\cite{cifar} using a 50\% pruning ratio. Figure~\ref{fig:variance} shows that SADP consistently maintains significantly lower gradient variance than both baselines throughout training, and the gap in variance widens over epochs. Additionally, SADP's gradient variance increases with higher values of the smoothing constant, which aligns with our theoretical analysis. These results empirically validate SADP’s ability to minimize gradient variance.

\subsubsection{The Smoothing Strategy Effectively Addresses the Gradient Scaling Issue} To ensure unbiased gradient estimation, SADP rescales the gradient of each selected example by a factor of $\frac{S}{N\cdot p_i}$ according to Eq.~(\ref{eq:est_w_grad}). However, when an example with a very low selection probability $p_i$ is chosen, the resulting scale factor can become excessively large, amplifying the gradient magnitude and destabilizing the training process. To empirically validate this issue, we record the maximum gradient scale factor per epoch on CIFAR-10 under a 50\% pruning ratio, comparing SADP with and without the proposed smoothing mechanism. As shown in Figure~\ref{app:fig:smooth_scale_factor}, SADP without smoothing frequently encounters extreme scaling factors, which leads to unstable updates and lower the accuracy to 92.36\%. In contrast, SADP with smoothing maintains stable scale factors throughout training, resulting in a significantly improved accuracy of 94.82\%. These results underscore the importance of the smoothing strategy in controlling scaling factors for stable training.

\begin{figure}[!t]
\centering
\includegraphics[width=0.85\linewidth]{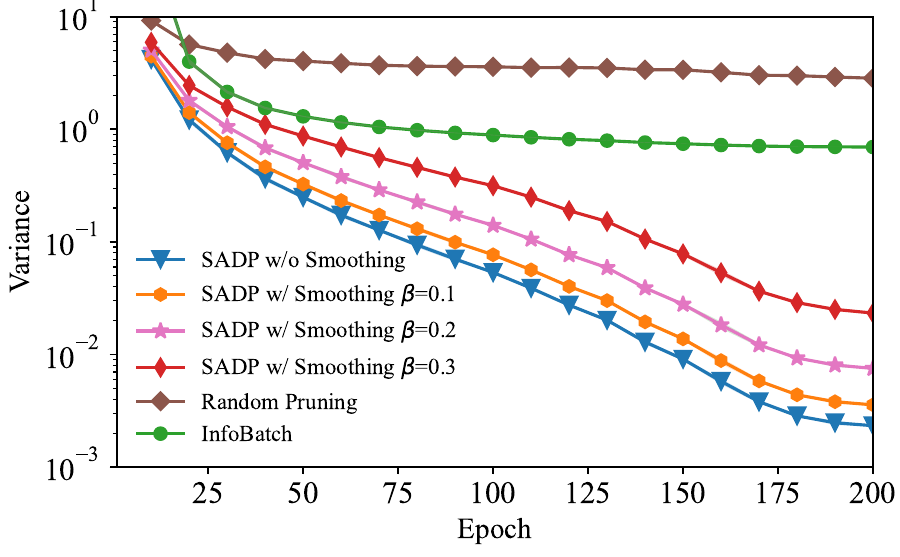}
\vspace{-1.8mm}
\caption{Comparison of gradient variance. Results are shown on CIFAR-10 at a 50\% pruning ratio.}
\label{fig:variance}
\vspace{-1.5mm}
\end{figure}

\begin{figure}[!t]
\centering
\includegraphics[width=0.85\linewidth]{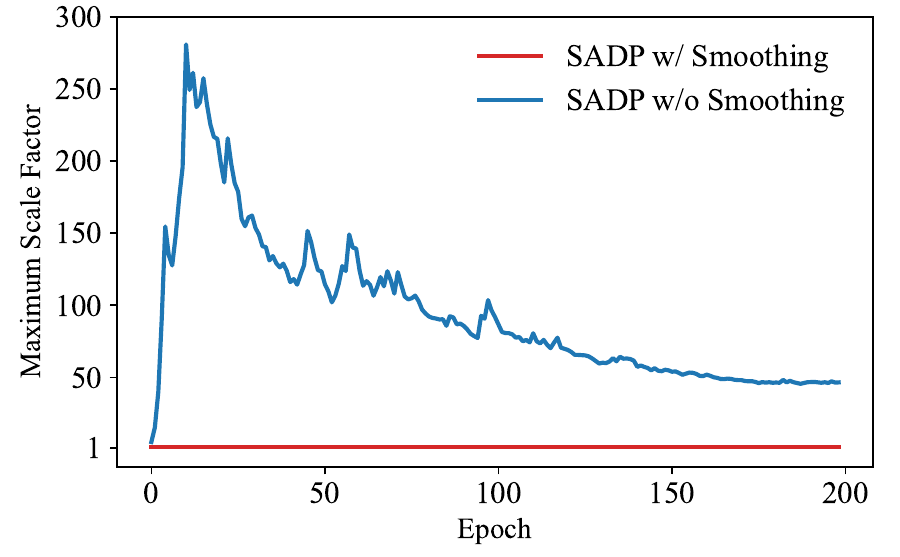}
\vspace{-2mm}
\caption{Comparison of maximum gradient scale factors per epoch with and without the smoothing strategy in SADP. Experiments are conducted on CIFAR-10 at a 50\% pruning ratio.}
\label{app:fig:smooth_scale_factor}
\end{figure}

\subsection{Broad Applicability}
\label{subsec:broad_app}
\subsubsection{SADP Integrates Seamlessly with Efficient SNN Models} To assess the compatibility of SADP with efficient SNN models, we consider two representative examples: the Parallel Spiking Neuron (PSN)~\cite{fang2023parallel}, which facilitates parallel training along the temporal dimension, and the Temporal Reversible SNN (T-RevSNN)~\cite{pmlr-v235-hu24q}, which improves training efficiency through reversible forward propagation. We integrate SADP with both models on ImageNet under a $35\%$ pruning ratio. As summarized in Table~\ref{tab:compatibility}, SADP consistently achieves a $35\%$ reduction in training time without degrading accuracy. These findings highlight that SADP can be combined with advanced SNN models, providing additional training efficiency gains without compromising model performance.

\begin{table}[!t]
\centering		
\caption{Compatibility of SADP with efficient SNN models. We integrate SADP with a 35\% pruning ratio into PSN~\cite{fang2023parallel} and T-RevSNN~\cite{pmlr-v235-hu24q} and compare its accuracy and wall-clock time (in hours) against full-data training on ImageNet for 100 epochs.}
% \vspace{1mm}
% \resizebox{\linewidth}{!}{%
\setlength{\tabcolsep}{1.5pt}
\begin{tabular}{l|c|c|cc}
\hline
\hline
 & Net (Time Window)                  & Method    & Acc. (\%) & Time (Hours)  \\ \hline
\multirow{2}{*}{PSN~\cite{fang2023parallel}}      & \multirow{2}{*}{\makecell[c]{SEW-ResNet18\\(T=4)}}   & Full Data & 67.13     & 104        \\
         &                      & SADP      & 66.94     & 68 ($\downarrow$34.6\%) \\ \hline
\multirow{2}{*}{T-RevSNN~\cite{pmlr-v235-hu24q}} & \multirow{2}{*}{\makecell[c]{ResNet18 (512)\\(T=4)}} & Full Data & 71.54     & 232        \\
         &                      & SADP      & 70.83     & 152 ($\downarrow$34.5\%) \\ \hline \hline
\end{tabular}
% }
\label{tab:compatibility}
\vspace{-2mm}
\end{table}

\begin{table}[!t]
\centering
\caption{Comparison of data pruning methods under online learning.}
\begin{tabular}{l|ccc}
\hline
\hline
Online Learning & \multicolumn{3}{c}{E-Prop~\cite{eprop}} \\
Dataset & \multicolumn{3}{c}{GSC} \\
Net (Time Window) & \multicolumn{3}{c}{Recurrent SNN (T=101)} \\
Full-data Accuracy & \multicolumn{3}{c}{ $89.33\%_{\pm 0.08}$} \\
\hline
Prune Ratio (\%) & 50 & 70 & 90 \\
\hline
Random Sampling & $88.50_{\pm 0.14}$ & $87.68_{\pm 0.12}$ & $83.69_{\pm 0.14}$ \\
InfoBatch~\cite{infobatch} & $88.74_{\pm 0.15}$ & $88.12_{\pm 0.09}$ & $84.60_{\pm 0.20}$ \\
SADP & $\mathbf{89.31_{\pm 0.07}}$ & $\mathbf{88.75_{\pm 0.05}}$ & $\mathbf{85.46_{\pm 0.12}}$ \\
\hline
Online Learning & \multicolumn{3}{c}{SLTT~\cite{sltt}} \\
Dataset & \multicolumn{3}{c}{CIFAR10-DVS} \\
Net (Time Window) & \multicolumn{3}{c}{VGG11 (T=10)} \\
Full-data Accuracy & \multicolumn{3}{c}{ $78.10_{\pm 0.36}$} \\
\hline
Pruning Ratio (\%) & 50 & 70 & 90 \\
\hline
Random Sampling & $77.49_{\pm 0.34}$ & $77.27_{\pm 0.24}$ & $71.83_{\pm 0.73}$ \\
InfoBatch~\cite{infobatch} & $77.50_{\pm 0.45}$ & $77.00_{\pm 0.64}$ & $73.00_{\pm 0.29}$ \\
SADP & $\mathbf{78.10_{\pm 0.70}}$ & $\mathbf{77.97_{\pm 0.45}}$ & $\mathbf{74.97_{\pm 0.17}}$ \\
\hline
\hline
\end{tabular}
\label{app:tab:online_learning}
\end{table}

\subsubsection{SADP is Compatible with Online Learning Rules}
\label{app:sec:online_learning}

In addition to conventional BPTT-based training, online learning algorithms offer an attractive alternative for memory-efficient training of SNNs by continuously updating model parameters at each time step. To assess the effectiveness of SADP under this paradigm, we examine two representative methods: E-Prop~\cite{eprop} and SLTT~\cite{sltt}. SADP is combined with the E-Prop rule on the Google Speech Command (GSC) dataset~\cite{9311226} using a 3-hidden-layer recurrent SNN, and with SLTT on CIFAR10-DVS using ResNet18. The results, summarized in Table~\ref{app:tab:online_learning}, show that SADP consistently surpasses the InfoBatch~\cite{infobatch} and Random Sampling baselines across different pruning ratios. These findings confirm that SADP is fully compatible with online learning rules.
%, underscoring its versatility and practical applicability in diverse SNN training scenarios.

\subsubsection{SADP is Compatible with Local Learning Rules} To further assess its versatility, we evaluate the compatibility of SADP with biologically inspired local learning rules. We integrate SADP with DECOLLE~\cite{decolle} and ELL~\cite{ma2023ell} on CIFAR-10 using ResNet18. Both methods employ layer-wise auxiliary classifiers to locally train each network layer, with DECOLLE using fixed classifiers and ELL adopting trainable ones. As shown in Table~\ref{app:tab:local_learning}, SADP consistently surpasses Random Pruning and InfoBatch~\cite{infobatch} across varying pruning ratios under both schemes, confirming its effectiveness in conjunction with local learning paradigms.

%The generalization of SADP to different training schemes is possible because its selection criterion can be interpreted in terms of the norm of weight updates. The spike-aware score can be redefined under specific learning rules to reflect each sample's contribution to weight changes, enabling SADP to be effective and broadly applicable beyond BPTT.

The strong generalization of SADP  across diverse training schemes can be attributed to the fact that its selection criterion is based on the weight gradient norms. In particular, the spike-aware importance score can be reformulated to quantify each example’s contribution to weight gradient norms, thereby ensuring that SADP remains effective and broadly applicable across different training algorithms.

\subsubsection{SADP is Compatible with Efficient Inference Methods}
Several methods have been proposed to reduce the energy consumption of SNNs during inference, such as quantization-aware training~\cite{wei2025qpsnn} and network pruning~\cite{li2024towards}. While these techniques improve inference efficiency, they often incur substantial training cost in order to preserve model accuracy. To evaluate the compatibility of SADP with such approaches, we integrated SADP into both representative frameworks~\cite{wei2025qpsnn,li2024towards} using their publicly available implementations. As shown in Table~\ref{app:tab:efficient_inference}, SADP consistently outperforms existing data pruning baselines across different pruning ratios under both inference-efficient methods. These results confirm that SADP can be combined with existing efficient inference techniques.

\begin{table}[!t]
\centering
\caption{Comparison of data pruning methods under local learning on the CIFAR-10 dataset using ResNet18 with time window T=4.}
\begin{tabular}{l|ccc}
\hline
\hline
Local Learning & \multicolumn{3}{c}{DECOLLE~\cite{decolle}} \\
Full-data Accuracy & \multicolumn{3}{c}{ $61.96\%_{\pm 0.44}$} \\
\hline
Prune Ratio (\%) & 30   & 50  & 70  \\
\hline
Random Sampling & $60.69_{\pm 0.54}$ & $59.07_{\pm 0.37}$ & $56.45_{\pm 0.33}$ \\
InfoBatch~\cite{infobatch} & $61.39_{\pm 0.41}$ & $59.88_{\pm 0.40}$ & $57.57_{\pm 0.29}$ \\
SADP & $\mathbf{61.84_{\pm 0.22}}$ & $\mathbf{61.22_{\pm 0.45}}$ & $\mathbf{58.54_{\pm 0.33}}$ \\
\hline
Local Learning & \multicolumn{3}{c}{ELL~\cite{ma2023ell}} \\
Full-data Accuracy & \multicolumn{3}{c}{ $87.62\%_{\pm 0.07}$} \\
\hline
Prune Ratio (\%) & 30   & 50  & 70  \\
\hline
Random Sampling & $86.89_{\pm 0.08}$ & $85.87_{\pm 0.11}$ & $84.12_{\pm 0.11}$ \\
InfoBatch~\cite{infobatch} & $87.26_{\pm 0.04}$ & $86.45_{\pm 0.13}$ & $84.51_{\pm 0.17}$ \\
SADP & $\mathbf{87.60_{\pm 0.07}}$ & $\mathbf{87.11_{\pm 0.09}}$ & $\mathbf{85.53_{\pm 0.10}}$ \\
\hline
\hline
\end{tabular}
\label{app:tab:local_learning}
\vspace{-1mm}
\end{table}

\begin{table}[!t]
\centering
\caption{Comparison of data pruning methods under efficient inference on the CIFAR-10 dataset using VGG16 with time window T=4.}
\begin{tabular}{l|ccc}
\hline
\hline
Efficient Inference & \multicolumn{3}{c}{Quantization-aware Training~\cite{wei2025qpsnn} (8 Bits)} \\
Full-data Accuracy & \multicolumn{3}{c}{ $92.04\%_{\pm 0.07}$} \\
\hline
Prune Ratio (\%) & 30 & 50 & 70 \\
\hline
Random Sampling & $90.92_{\pm 0.10}$ & $89.66_{\pm 0.13}$ & $87.64_{\pm 0.07}$ \\
InfoBatch~\cite{infobatch} & $91.16_{\pm 0.04}$ & $90.00_{\pm 0.13}$ & $87.93_{\pm 0.10}$ \\
SADP & $\mathbf{91.65_{\pm 0.04}}$ & $\mathbf{90.68_{\pm 0.11}}$ & $\mathbf{89.05_{\pm 0.09}}$ \\
\hline
Efficient Inference & \multicolumn{3}{c}{Network Pruning~\cite{li2024towards} (Connectivity=30\%)} \\
Full-data Accuracy & \multicolumn{3}{c}{ $90.45\%_{\pm 0.09}$} \\
\hline
Prune Ratio (\%) & 30 & 50 & 70 \\
\hline
Random Sampling & $89.38_{\pm 0.11}$ & $88.52_{\pm 0.03}$ & $86.61_{\pm 0.22}$ \\
InfoBatch~\cite{infobatch} & $89.72_{\pm 0.03}$ & $88.98_{\pm 0.10}$ & $87.26_{\pm 0.07}$ \\
SADP & $\mathbf{90.44_{\pm 0.05}}$ & $\mathbf{89.82_{\pm 0.02}}$ & $\mathbf{88.45_{\pm 0.12}}$ \\
\hline
\hline
\end{tabular}
\label{app:tab:efficient_inference}
\end{table}

\section{Conclusion}
\label{sec:conclusion}
In this article, we proposed SADP, the first effective and efficient data pruning method for SNNs. SADP introduces a spike-aware importance score that identifies informative training examples and assigns selection probabilities proportional to these scores to minimize gradient variance. Both theoretical analyses and empirical results demonstrate that SADP reduces gradient variance, accelerates convergence, and provides an effective approximation of the per-example gradient norm. Extensive experiments further show that SADP consistently outperforms existing data pruning methods and can reduce training time by over 30\% with lossless accuracy across diverse datasets and architectures. Therefore, this work lays a solid foundation for data-efficient training of SNNs and opens new avenues for scalable and efficient training in large-scale neuromorphic systems~\cite{kudithipudi2025neuromorphic}.

% \section*{Acknowledgments}
% This should be a simple paragraph before the References to thank those individuals and institutions who have supported your work on this article.

% {\appendix[Proof of the Zonklar Equations]
% Use $\backslash${\tt{appendix}} if you have a single appendix:
% Do not use $\backslash${\tt{section}} anymore after $\backslash${\tt{appendix}}, only $\backslash${\tt{section*}}.
% If you have multiple appendixes use $\backslash${\tt{appendices}} then use $\backslash${\tt{section}} to start each appendix.
% You must declare a $\backslash${\tt{section}} before using any $\backslash${\tt{subsection}} or using $\backslash${\tt{label}} ($\backslash${\tt{appendices}} by itself
%  starts a section numbered zero.)}

%{\appendices
%\section*{Proof of the First Zonklar Equation}
%Appendix one text goes here.
% You can choose not to have a title for an appendix if you want by leaving the argument blank
%\section*{Proof of the Second Zonklar Equation}
%Appendix two text goes here.}

\bibliographystyle{IEEEtran}
\bibliography{myRefs}

% \clearpage
% \onecolumn
% \input{appendix.tex}

% \newpage

% \section{Biography Section}
% If you have an EPS/PDF photo (graphicx package needed), extra braces are
%  needed around the contents of the optional argument to biography to prevent
%  the LaTeX parser from getting confused when it sees the complicated
%  $\backslash${\tt{includegraphics}} command within an optional argument. (You can create
%  your own custom macro containing the $\backslash${\tt{includegraphics}} command to make things
%  simpler here.)
 
% \vspace{11pt}

% \bf{If you include a photo:}\vspace{-33pt}
% \begin{IEEEbiography}[{\includegraphics[width=1in,height=1.25in,clip,keepaspectratio]{fig1}}]{Michael Shell}
% Use $\backslash${\tt{begin\{IEEEbiography\}}} and then for the 1st argument use $\backslash${\tt{includegraphics}} to declare and link the author photo.
% Use the author name as the 3rd argument followed by the biography text.
% \end{IEEEbiography}

% \vspace{11pt}

% \bf{If you will not include a photo:}\vspace{-33pt}
% \begin{IEEEbiographynophoto}{John Doe}
% Use $\backslash${\tt{begin\{IEEEbiographynophoto\}}} and the author name as the argument followed by the biography text.
% \end{IEEEbiographynophoto}

\vfill

\end{document}